\colorlet{mydarkblue}{blue!60!black}
\pgfplotsset{compat=newest}
\def\th@plain{%
  \thm@notefont{}
  \itshape 
}
\def\th@definition{%
  \thm@notefont{}
  \normalfont 
}
\newcommand\thefontsize{The current font size is: \f@size pt}
\def\1{\bm{1}}
\def\vzero{{\bm{0}}}
\def\vone{{\bm{1}}}
\def\vmu{{\boldsymbol{\mu}}}
\def\vtheta{{\boldsymbol{\theta}}}
\def\vb{{\bm{b}}}
\def\vd{{\bm{d}}}
\def\vf{{\bm{f}}}
\def\vg{{\bm{g}}}
\def\vh{{\bm{h}}}
\def\vj{{\bm{j}}}
\def\vm{{\bm{m}}}
\def\vn{{\bm{n}}}
\def\vu{{\bm{u}}}
\def\vw{{\bm{w}}}
\def\vx{{\bm{x}}}
\def\vy{{\bm{y}}}
\def\mA{{\bm{A}}}
\def\mC{{\bm{C}}}
\def\mE{{\bm{E}}}
\def\mI{{\bm{I}}}
\def\mJ{{\bm{J}}}
\def\mV{{\bm{V}}}
\def\mW{{\bm{W}}}
\def\mX{{\bm{X}}}
\def\mSigma{{\boldsymbol{\Sigma}}}
\DeclareMathAlphabet{\mathsfit}{\encodingdefault}{\sfdefault}{m}{sl}
\SetMathAlphabet{\mathsfit}{bold}{\encodingdefault}{\sfdefault}{bx}{n}
\newcommand{\E}{\mathbb{E}}
\newcommand{\R}{\mathbb{R}}
\newcommand{\softmax}{\mathrm{softmax}}
\newcommand{\Var}{\mathrm{Var}}
\newcommand{\Cov}{\mathrm{Cov}}
\DeclareMathOperator*{\argmax}{arg\,max}
\newcommand{\vphi}{\boldsymbol{\phi}}
\newcommand{\vsigma}{\boldsymbol{\sigma}}
\newcommand{\N}{\mathcal{N}}
\renewcommand{\R}{\mathbb{R}}
\renewcommand{\E}{\mathop{\mathbb{E}}}
\newcommand{\D}{\mathcal{D}}
\newcommand{\norm}[1]{\Vert #1 \Vert}
\newcommand{\inv}{{-1}}
\renewcommand{\L}{\mathcal{L}}
\newcommand{\ReLU}{\mathrm{ReLU}}
\newtheorem{proposition}{Proposition}
\newtheorem{lemma}[proposition]{Lemma}
\newtheorem*{lemma*}{Lemma}
\theoremstyle{definition}
\theoremstyle{remark}
\title{An Infinite-Feature Extension for Bayesian ReLU Nets That Fixes Their Asymptotic Overconfidence}
\author{%
  Agustinus Kristiadi \\
  University of T\"{u}bingen \\
  \texttt{agustinus.kristiadi@uni-tuebingen.de} \\
  \And
  Matthias Hein \\
  University of T\"{u}bingen \\
  \texttt{matthias.hein@uni-tuebingen.de} \\
  \AND
  Philipp Hennig \\
  University of T\"{u}bingen and MPI for Intelligent Systems, T\"{u}bingen \\
  \texttt{philipp.hennig@uni-tuebingen.de}
}
\begin{document}

\maketitle

\begin{abstract}
  A Bayesian treatment can mitigate overconfidence in ReLU nets around the training data. But far away from them, ReLU Bayesian neural networks (BNNs) can still underestimate uncertainty and thus be asymptotically overconfident. This issue arises since the output variance of a BNN with finitely many features is quadratic in the distance from the data region. Meanwhile, Bayesian linear models with ReLU features converge, in the infinite-width limit, to a particular Gaussian process (GP) with a variance that grows cubically so that no asymptotic overconfidence can occur. While this may seem of mostly theoretical interest, in this work, we show that it can be used in practice to the benefit of BNNs. We extend finite ReLU BNNs with infinite ReLU features via the GP and show that the resulting model is asymptotically maximally uncertain far away from the data while the BNNs' predictive power is unaffected near the data. Although the resulting model approximates a full GP posterior, thanks to its structure, it can be applied \emph{post-hoc} to any pre-trained ReLU BNN at a low cost.
\end{abstract}

\section{Introduction}
\label{sec:intro}

Approximate Bayesian methods, which turn neural networks (NNs) into Bayesian neural networks (BNNs), can be used to address the overconfidence issue of NNs \citep{nguyen2015deep}. Specifically, \citet{kristiadi2020being} recently showed for binary ReLU classification networks that far away from the training data, i.e.~when scaling any input with a scalar $\alpha > 0$ and taking the limit $\alpha\to\infty$, the confidence of (Gaussian-based) BNNs is strictly less than one---``being Bayesian'' can thus mitigate overconfidence. This result is encouraging vis-\`{a}-vis standard point-estimated networks, for which \citet{hein2019relu} showed earlier that the same asymptotic limit always yields arbitrarily high confidence. Nevertheless, BNNs can still be asymptotically overconfident, albeit less so than standard NNs, since the aforementioned uncertainty bound can be loose.

We identify that this issue arises because the variance over function outputs of a BNN is asymptotically quadratic, while the corresponding mean is asymptotically linear w.r.t.~$\alpha$.
Intuitively, fixing this issue requires adding an unbounded number of ReLU features with increasing distance from the training data, so that the output mean stays unchanged but the associated variance grows super-quadratically. And indeed there is a particular Gaussian process (GP), arising from the cubic spline kernel \citep{wahba1990spline}, which has cubic variance growth and can be seen as a Bayesian linear model with countably infinite ReLU features. In the context of the analysis, the fact that standard ReLU BNNs only use \emph{finitely} many features makes them ``miss out'' on some uncertainty that should be there. In this work, we ``add back'' this missing uncertainty into finite ReLU BNNs by first extending the cubic spline kernel to cover the whole input domain (\cref{fig:double_cubicspline_gps}) and then using the resulting GP to model residuals of BNNs \citep{blight1975bayesian,ohagan1978curve,qiu2020quantifying}. Conceptually, we extend finite BNNs into infinite ones. The proposed kernel has two crucial properties:  (i) It has negligible values around the origin, which we can assume without loss of generality to be the region where the data reside, and (ii) like the cubic spline kernel, its variance grows cubically in~$\alpha$. Using the first property, we can approximately decompose the resulting \emph{a posteriori} function output simply as \emph{a posteriori} BNNs' output plus \emph{a priori} the GP's output. This extension can therefore be applied to any pre-trained ReLU BNN in a \emph{post-hoc} manner. And due to the second property, we can show that the extended BNNs are guaranteed to predict with uniform confidence away from the data. This approach thus fixes ReLU BNNs' asymptotic overconfidence, without affecting the BNNs' predictive mean. Finally, the method can be extended further while still preserving all these properties, by also modeling the representations of input points with the proposed GP. By doing so, the GP can adapt to the data well, and hence also improve the extended ReLU BNNs' non-asymptotic uncertainty.

A core contribution of this paper is the theoretical analysis: We show that our method (i) models the uncertainty that ReLU BNNs lack, thus (ii) ensuring that the surrounding output variance asymptotically grows cubically in the distance to the training data, and ultimately (iii) yields uniform asymptotic confidence in the multi-class classification setting. These results extend the prior analysis in so far as it is limited to the binary classification case and does not guarantee the asymptotically maximum-entropy prediction. Furthermore, our approach is complementary to the method of \citet{meinke2020towards}, which attains maximum uncertainty far from the data for non-Bayesian NNs. We empirically confirm the analysis and show effectiveness in the \emph{non}-asymptotic regime.

\begin{figure*}[t]
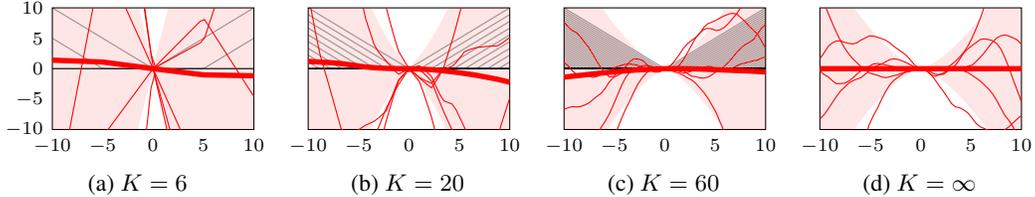

    \def\figonewidth{0.305\linewidth}
    \def\figoneheight{0.14\textheight}

    \centering

    \subfloat[$K = 6$]{\input{figs/dcsc_3.tex}}
    \hfill
    \subfloat[$K = 20$]{\input{figs/dcsc_10.tex}}
    \hfill
    \subfloat[$K = 60$]{\input{figs/dcsc_30.tex}}
    \hfill
    \subfloat[$K = \infty$]{\input{figs/dcsc_gp.tex}}

    \caption{The construction of a GP prior with the proposed ``ReLU kernel'', as the limiting covariance of the output of a Bayesian linear model with $K$ ReLU features (grey), arranged at regular intervals, oriented away from the origin. Red curves are function samples with the thick one being the mean, and the red shade their std. dev. With finite $K$ \textbf{(a-c)}, the variance grows quadratically, leading to the asymptotic overconfidence in ReLU BNNs. But, with $K = \infty$ \textbf{(d)}, the variance grows \emph{cubically} away from the origin. The fact that this kernel has zero mean and negligible variance near the origin enables us to easily combine this GP with standard finite pre-trained ReLU BNNs.}

    \label{fig:double_cubicspline_gps}
\end{figure*}

\section{Background}
\label{sec:background}

\textbf{Notation}\enspace\enspace We denote a test point and its unknown label as $\vx_*$ and $y_*$, respectively. We denote any quantity that depends on $\vx_*$ with the same subscript, in particular $f_* := f(\vx_*)$, $k_* := k(\vx_*, \vx_*)$.

\subsection{Bayesian Neural Networks}
\label{subsec:bnn}

We focus on multi-class classification problems. Let $f: \R^N \times \R^D \to \R^C$ defined by $(\vx, \vtheta) \mapsto f(\vx; \vtheta) =: f_\vtheta(\vx)$ be a $C$-class ReLU network---a fully-connected or convolutional feed-forward network equipped with the ReLU nonlinearity. Here, $\vtheta$ is the collection of all parameters of $f$. Given an i.i.d.~dataset $\D := (\vx_m, y_m)_{m=1}^M$, the standard training procedure amounts to finding a \textbf{\emph{maximum a posteriori (MAP) estimate}} $\vtheta_\text{MAP} = \argmax_\vtheta\, \log p(\vtheta \mid \D)$.

One can also apply Bayes' theorem to infer the full posterior distribution of $\vtheta$---the resulting network is called a \textbf{\emph{Bayesian neural network (BNN)}}. A common way to approximate the posterior $p(\vtheta \mid \D)$ of a BNN is by a Gaussian $q(\vtheta) := \N(\vmu, \mSigma)$. Given this approximate posterior and a test point $\vx_* \in \R^N$, the prediction is given by $p(y_* \mid \vx_*, \D) = \int \softmax(f_\vtheta(\vx_*)) \, q(\vtheta) \, d\vtheta$. The Gaussian approximate posterior and softmax likelihood are our assumptions throughout this paper.

One can obtain a useful two-step closed-form approximation of the previous integral as follows. First, we perform a \textbf{\emph{network linearization}} on $f$ around $\vmu$ and obtain the following marginal over $f(\vx_*)$:
\begin{align}
    \label{eq:linearized_dist}
    p(f_* \mid \vx_*, \D) \approx \N(f_\vmu(\vx_*),\mJ_*^\top \mSigma \mJ_*) ,
\end{align}
where $\mJ_*$ is the $D \times C$ Jacobian matrix of $f_\vtheta(\vx_*)$ w.r.t. $\vtheta$ at $\vmu$. For brevity, let $\vm_*$ and $\mV_*$ be the above mean and covariance. To obtain the predictive distribution, we then apply the \textbf{\emph{generalized probit approximation}} \citep{gibbs1997bayesian,mackay1992evidence}:
\begin{align}
    \label{eq:generalized_probit}
    p(y_* = c \mid \vx_*, \D) = \int \softmax(f_*)_c \, p(f_* | \vx_*, \D) \,df_* \approx \frac{\exp(m_{*c} \, \kappa_{*c})}{\sum_{i = 1}^C \exp(m_{*i} \, \kappa_{*i})},
\end{align}
where for each $i = 1, \dots, C$, the real number $m_{*i}$ is the $i$-th component of the vector $\vm_*$, and $\kappa_{*i} := (1 + \pi/8 \, v_{*ii})^{-1/2}$ where $v_{*ii}$ is the $i$-th diagonal term of the matrix $\mV_*$. Both approximations above have been shown to be good both in terms of their errors and predictive performance \citep{mackay1992evidence,foong2019between,immer2020improving,lu2020uncertainty}.\footnote{The network linearization comes with the error of $O(\norm{\theta - \theta_\text{MAP}}^2)$ by Taylor's theorem. Meanwhile for the (generalized) probit approximation, low empirical error has been observed by \cite[Fig. 1]{mackay1992evidence}.}

While analytically useful, these approximations can be expensive due to the computation of the Jacobian matrix $\mJ_*$. Thus, \textbf{\emph{Monte Carlo (MC) integration}} is commonly used as an alternative, i.e.~we approximate $p(y_* \mid \vx_*, \D) \approx 1/S \sum_{s=1}^S p(y_* \mid f_{\vtheta_s}(\vx_*)); \enspace \vtheta_s \sim q(\vtheta)$. Finally, given a classification predictive distribution $p(y_* \mid \vx_*, \D)$, we define the predictive \textbf{\emph{confidence}} of $\vx_*$ as the maximum probability $\max_{c \in \{ 1, \dots, C\}} p(y_* = c \mid \vx_*, \D)$ over class labels. Far from the data, ideally the model should produce the \textbf{\emph{uniform confidence}} $p(y_* = c \mid \vx_*, \D) = 1/C$ for all $c = 1, \dots, C$.

\subsection{Asymptotic Overconfidence in BNNs}
\label{subsec:asymp_overconfidence}

\begin{wrapfigure}[16]{r}{0.3\textwidth}
    \centering

    \vspace{-2em}

    \hspace{-1.75em}
    \begin{minipage}{0.85\linewidth}
        \subfloat[Zoomed-in]{\includegraphics[width=\linewidth, height=0.08\textheight]{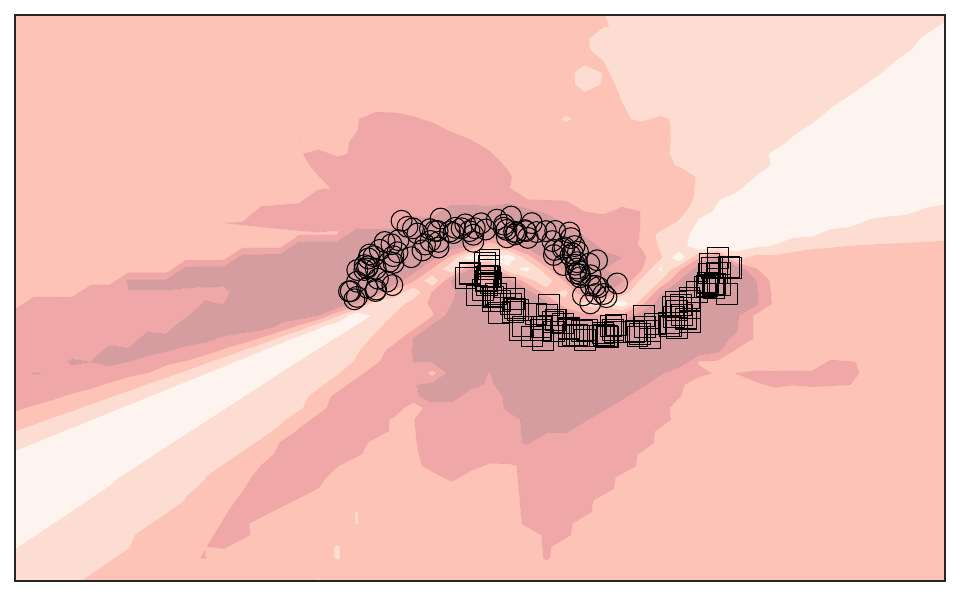}}
    \end{minipage}
    \hspace{-1em}
    \begin{minipage}{0.08\linewidth}
        \vspace{-1.255em}
        \subfloat{\includegraphics{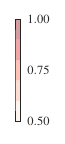}}
    \end{minipage}

    \setcounter{subfigure}{1}

    \vspace{1em}

    \hspace{-1.75em}
    \begin{minipage}{0.85\linewidth}
        \subfloat[Zoomed-out]{\includegraphics[width=\linewidth, height=0.08\textheight]{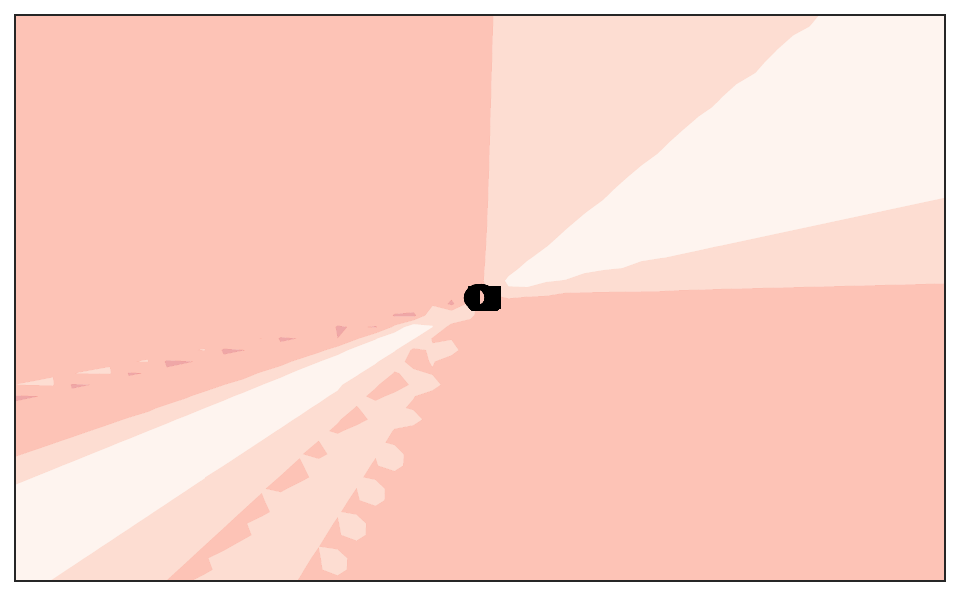}}
    \end{minipage}
    \hspace{-1em}
    \begin{minipage}{0.08\linewidth}
        \vspace{-1.255em}
        \subfloat{\includegraphics{toy_moon_colorbar}}
    \end{minipage}




    \caption{Confidence estimates of a BNN.}

    \label{fig:asymp_overconfidence}
\end{wrapfigure}

Given a fixed point estimate $\vtheta_\text{MAP}$, the ReLU network $f_{\vtheta_\text{MAP}}$ yields overconfident predictions, even for points far away from the training data \citep{hein2019relu}. That is, for almost any input $\vx_* \in \R^N$, 
one can show that there exists a class $c \in \{ 1, \dots, C \}$ such that $\lim_{\alpha \to \infty} \softmax(f_{\vtheta_\text{MAP}}(\alpha \vx_*))_c = 1$. Intuitively, this issue arises because the ReLU network yields a piecewise-affine function with finitely many linear regions (the domain of each affine function). Under this setup, by scaling $\vx_*$ with $\alpha$, at some point one arrives at an ``outer linear region'' and in this region, the network is always affine---either increasing or decreasing---even as $\alpha$ tends to infinity, and thus its softmax output converges to a ``one-hot vector''.

BNNs, even with a simple Gaussian approximate posterior, can help to mitigate this problem in binary classifications, as shown by \citet{kristiadi2020being}. The crux of their proof is the observation that in an outer linear region, the predictive distribution (via the probit approximation) is given by\footnote{We omit the bias parameter for simplicity.}
\begin{equation} \label{eq:probit_binary}
    p(y_* = 1 \mid \alpha \vx_*, \D) \approx \sigma \left( \frac{\alpha \vu^\top \vx_*}{\sqrt{1 + \pi/8 \, v(\alpha \vx_*)}} \right) ,
\end{equation}
where $\sigma$ is the logistic-sigmoid function, $\vu$ is the parameter vector corresponding to the linear region and the quadratic function $v$ maps $\alpha \vx_*$ to the variance of the network output. Unfortunately, both the numerator and denominator above are linear in $\alpha$ and thus altogether $p(y_* = 1 \mid \alpha \vx_*, \D)$ only converges to a constant strictly less than 1 as $\alpha \to \infty$, not necessarily the ideal uniform confidence prediction. BNNs can therefore still be overconfident, albeit less so than the point-estimated counterpart (\cref{fig:asymp_overconfidence}).

\subsection{ReLU and Gaussian processes}
\label{subsec:relu_gp}

The ReLU activation function $\ReLU(z) := \max(0, z)$ \citep{nair2010rectified} has become the \emph{de facto} choice of non-linearity in deep learning. Given an arbitrary real number $c$, it can be generalized as $\ReLU(z; c) := \max(0, z - c)$, with the ``kink'' at location $c$. An alternative formulation, useful below, is in terms of the Heaviside function $H$ as $\ReLU(z; c) = H(z - c) \cdot (z - c)$. We may define a collection of $K$ such ReLU functions evaluated at some point in $\R$ as the function $\vphi: \R \to \R^K$ with $z \mapsto (\ReLU(z; c_1), \dots, \ReLU(z; c_K))^\top$. We call this function the \textbf{\emph{ReLU feature map}}, which can be interpreted as ``placing'' ReLU functions at different locations in $\R$.

Consider a linear model $g: \R \times \R^K \to \R$ defined by $g(x; \vw) := \vw^\top \vphi(x)$. Suppose $\vphi$ regularly places $K$ generalized ReLU functions centered at $(c_i)_{i=1}^K$ on $[c_\text{min}, c_\text{max}] \subset \R$, where $c_\text{min} < c_\text{max}$. If we consider a Gaussian prior $p(\vw) := \N \left( \vw \,\middle|\, \vzero, \sigma^2 K^\inv (c_\text{max} - c_\text{min}) \mI \right)$, then as $K \to \infty$, the distribution over $g$ is a Gaussian process with mean $0$ and covariance (full derivation in \cref{app:derivations}):
\begin{equation*} \label{eq:cubic_spline}
    \widehat{k}^1 ( x, x'; c_\text{min}, \sigma^2 ) := \sigma^2 H(\bar{x} - c_{\textrm{min}}) \left( \frac{1}{3} (\bar{x}^3 - c_\text{min}^3)  \right. \left. - \frac{1}{2} (\bar{x}^2 - c_\text{min}^2) (x + x') + (\bar{x} - c_\text{min}) xx' \vphantom{\frac{1}{3}} \right) .
\end{equation*}
%
Here, the superscript $1$ denotes the fact that this function is over a $1$-dimensional input space and $\bar{x} := \min(x, x')$. Since the expression above does not depend on $c_{\max}$, we can consider the limit $c_{\max}\rightarrow \infty$, and thus this kernel is non-zero on $(c_\text{min}, \infty)$. This covariance function is the \textbf{\emph{cubic spline kernel}} \citep{wahba1990spline}. The name indicates that posterior mean of the associated GP is piecewise-cubic. But it also has variance $\widehat{k}^1(x, x; c_\text{min}, \sigma^2)$ which is cubic in $x$ and negligible for $x$ close to $c_\text{min}$.

\section{Infinite-Feature Extension for ReLU BNNs}
\label{sec:method}

From \cref{subsec:asymp_overconfidence} it becomes clear that the asymptotic miscalibration of ReLU BNNs is due to the finite number of ReLU features used, which results in only quadratic variance growth. An infinite-ReLU GP with the cubic spline kernel has cubic variance growth, which, combined with the probit approximation, yields uniform confidence in the limit. But of course, full GP inference is prohibitively expensive. In this section, we propose a cheap, \emph{post-hoc} way to extend any pre-trained ReLU BNN with the aforementioned GP by extending the cubic spline kernel and exploiting its two important properties. We will see that the resulting model approximates the full GP posterior and combines the predictive power of the BNN with a guarantee for asymptotically uniform confidence. While in our analysis we employ network linearization for analytical tractability, the method can be applied via MC-integration as well (cf.~\cref{sec:experiments}). All proofs are in \cref{app:proofs}.

\subsection{The Double-Sided Cubic Spline Kernel}

The cubic spline kernel is one-sided in the sense that it has zero variance on $(-\infty, c_\text{min})$, and therefore is unsuitable for modeling over the entire domain. This is easy to fix by first setting $c_\text{min} = 0$ to obtain a kernel $\overrightarrow{k}^1(x, x'; \sigma^2) := \widehat{k}^1(x, x'; 0, \sigma^2)$ which is non-zero only on $(0,\infty)$. Now, by an entirely analogous construction with infinitely many ReLU functions pointing to the opposite direction (i.e.~left) via $\ReLU(-z; c)$, we obtain another kernel $\overleftarrow{k}^1(x, x'; \sigma^2) := \overrightarrow{k}^1(-x, -x'; \sigma^2)$, which is non-zero only on $(-\infty,0)$. Combining them together, we obtain the following kernel, which covers the whole real line: $k^1(x, x'; \sigma^2) := \overleftarrow{k}^1(x, x'; \sigma^2) + \overrightarrow{k}^1(x, x'; \sigma^2)$---see \cref{fig:double_cubicspline_gps}. Note in particular that the variance $k^1(0, 0)$ at the origin is zero. This is a key feature of this kernel that enables us to efficiently combine the resulting GP prior with a pre-trained BNN.

For multivariate input domains, we define
\begin{equation} \label{eq:combined_kernel_multidim}
    k(\vx, \vx'; \sigma^2) := \frac{1}{N} \sum_{i=1}^N k^1(x_i, x'_i; \sigma^2)
\end{equation}
for any $\vx, \vx' \in \R^N$ with $N > 1$. We here deliberately use a summation, instead of the alternative of a product, since we want the associated GP to add uncertainty whenever \emph{at least} one input dimension has non-zero value. (By contrast, a product $k(\vx, \vx')$ is zero if one of the $k^1(x_i, x_i')$ is zero.) We call this kernel the \textbf{\emph{double-sided cubic spline (DSCS) kernel}}. Similar to the one-dimensional case, two crucial properties of this kernel are that it has negligible variance around the origin of $\R^N$ and for any $\vx_* \in \R^N$ and $\alpha \in \R$, the value $k(\alpha\vx_*, \alpha\vx_*)$ is \emph{cubic} in $\alpha$.

\subsection{ReLU-GP Residual}
\label{subsec:rgpr}

For simplicity, we start with real-valued BNNs and discuss the generalization to multi-dimensional output later. Let $f: \R^N \times \R^D \to \R$ be an $L$-layer, real-valued ReLU BNN. Since $f$ by itself can be asymptotically overconfident, it has \emph{residual} in its uncertainty estimates far from the data. Our goal is to extend $f$ with the GP prior that arises from the DSCS kernel, to model this uncertainty residual. We do so by placing infinitely many ReLU features over its input space $\R^N$ by following the DSCS kernel construction in the previous section. Then, we arrive at a zero-mean GP prior $\mathcal{GP}(\widehat{f} \mid 0, k)$ over a real-valued random function $\widehat{f}: \R^N \to \R$. Following previous works \citep{wahba1978improper,ohagan1978curve,qiu2020quantifying}, we use this GP prior to model the residual of $f$ by defining
\begin{align}
    \label{eq:augmented_output}
    \widetilde{f} := f + \widehat{f},   \qquad\text{where } \widehat{f} \sim \mathcal{GP}(0, k) ,
\end{align}
and call this method \textbf{\emph{ReLU-GP residual (RGPR)}}.

We now analyze RGPR. Besides linearization, we assume that the DSCS kernel has, without loss of generality, a negligibly small value at the data, i.e. $k(\vx_m, \vx_*) \approx 0$ for all $(\vx_m)_{m=1}^M$ and any i.i.d. test point $\vx_*$. Note that this can always be satisfied by centering and scaling. The error of this approximation is stated in the following.

\vspace{0.5em}
\begin{restatable}{lemma}{lemmascaling} \label{lemma:scaling}
    Let $0 < \delta < 1$, and let $\sigma^2 > 0$ be a constant. For any $\vx, \vx' \in \R^N$ with $\norm{\vx}^2, \norm{\vx'}^2 \leq \delta$ we have $k(\vx, \vx'; \sigma^2) \in O(\delta^3)$.
\end{restatable}


Using this approximation, we show the approximate GP posterior of $\widetilde{f}$.

\vspace{0.5em}
\begin{restatable}[RGPR's GP Posterior]{proposition}{propnotraining} \label{prop:no_training}
    Let $f: \R^N \times \R^D \to \R$ be a ReLU BNN with weight distribution $\N(\vtheta \mid \vmu, \mSigma)$, and let $\D := ( \vx_m, y_m )_{m=1}^M =: (\mX, \vy)$ be a dataset. Assume that $\norm{\vx_m}^2, \norm{\vx}^2 \leq \delta$ for all $m = 1, \dots, M$ and any i.i.d. test point $\vx \in \R^N$, with $0 < \delta < 1$. Then given an i.i.d. input point $\vx_* \in \R^N$, under the linearization of $f$ w.r.t.~$\vtheta$ around $\vmu$, the GP posterior over $\widetilde{f}_*$ is a Gaussian with mean and variance
    \begin{align}
        \label{eq:gp_post_realval_mean}
        \E(\widetilde{f}_* \mid \D) &\approx f(\vx_*; \vmu) + \vh_*^\top \mC^\inv (\vy - f(\mX; \vmu)) , \\
        \label{eq:gp_post_realval_var}
        \Var(\widetilde{f}_* \mid \D) &\approx \vg(\vx_*)^\top \mSigma \vg(\vx_*) + k(\vx_*, \vx_*) - \vh_*^\top \mC^\inv \vh_* ,
    \end{align}
    respectively, where $\vh_* := (\Cov(f(\vx_*), f(\vx_1)), \dots, \Cov(f(\vx_*), f(\vx_M)))^\top$, while $\mC$ is the covariance matrix $\left(\Cov(f(\vx_i), f(\vx_j))\right)_{ij}^M$, and $f(\mX; \vmu) := (f(\vx_1; \vmu), \dots, f(\vx_M; \vmu))^\top$. Moreover, the approximation error in \eqref{eq:gp_post_realval_mean} is in $O\left( (\delta^6 \norm{\mC^\inv} \norm{\vm})/(1-\delta^3 \norm{\mC^\inv}) \right)$ where $\vm = \mC^\inv (\vy - f(\mX; \vmu))$, while the error in \eqref{eq:gp_post_realval_var} is in $O\left( (\delta^6 (\norm{\mC^\inv} + \norm{\mC^\inv} \norm{\vm}))/(1-\delta^3 \norm{\mC^\inv}) \right)$ where $\vm = \mC^\inv \vh_*$.
\end{restatable}

While this result is applicable to any Gaussian weight distribution, an interesting special case is where we assume that the BNN is \emph{well-trained}, i.e.~we have a Gaussian (approximate) posterior $p(\vtheta \mid \D)$ which induces accurate prediction and high output confidence on each of the training data. In this case, the last term of \eqref{eq:gp_post_realval_mean} is negligible since the residual $\vy - f(\mX; \vmu)$ is close zero. Moreover, notice that the last term in \eqref{eq:gp_post_realval_var} can be upper-bounded by
\begin{equation*}
    \vh_*^\top \mC^\inv \vh_* \leq \lambda_\text{max} \norm{\vh_*}^2 = \lambda_\text{max} \sum_{m=1}^M \Cov(f(\vx_*), f(\vx_m))^2 ,
\end{equation*}
where $\lambda_\text{max}$ denotes the largest eigenvalue of $\mC^\inv$. The last summand above can further be upper-bounded via the Cauchy-Schwarz inequality by $\Cov(f(\vx_*), f(\vx_m))^2 \leq \Var(f(\vx_*)) \Var(f(\vx_m))$. But our assumption implies that $\Var(f(\vx_m))$ is close to zero for all $m = 1, \dots, M$. Thus, if $f$ is a pre-trained ReLU BNN, we approximately have
\begin{align}
    \widetilde{f}_* \sim \N(f(\vx_*; \vmu), \vg_*^\top \mSigma \vg_* + k_*) ,
\end{align}
which can be thought of as arising from the sum of two Gaussian r.v.s. $f_* \sim \N(f(\vx_*; \vmu), \vg_*^\top \mSigma \vg_*)$ and $\widehat{f}_* \sim \N(0, k_*)$---we are back to the definition of RGPR \eqref{eq:augmented_output}. Thus, unlike previous works on modeling residuals with GPs \citep{wahba1978improper,ohagan1978curve,qiu2020quantifying}, the \emph{GP posterior} of RGPR can approximately be written as \emph{a posteriori} $f$ plus \emph{a priori} $\widehat{f}$. RGPR can hence be applied \emph{post-hoc}, after the usual training process of the BNN. Furthermore, we see that RGPR does indeed model only the uncertainty residual of the BNN since it only affects the predictive variance. In particular, it does not affect the output mean of the BNN and thus preserves its predictive accuracy---this is often desirable in practice since the main reason for using deep ReLU nets is due to their accurate predictions.

Generalization to BNNs with multiple outputs is straightforward. Let $f: \R^N \times \R^D \to \R^C$ be a vector-valued, pre-trained, $L$-layer ReLU BNN with posterior $\N(\vtheta \mid \vmu, \mSigma)$. We assume that the following real-valued random functions $(\widehat{f}^{(c)}: \R^{\widehat{N}} \to \R)_{c=1}^C$ are i.i.d. as the GP prior $\mathcal{GP}(0, k)$ \eqref{eq:augmented_output}. Thus, for any $\vx_* \in \R^N$, defining $\widehat{f}_* := (\widehat{f}^{(1)}_*, \dots, \widehat{f}^{(C)}_*)^\top$, we have $p(\widehat{f}_*) = \N ( \vzero, k_* \mI)$, and so under the linearization of $f$, this implies that the marginal GP posterior of RGPR is approximately given by the following $C$-variate Gaussian
\begin{equation} \label{eq:gp_posterior_multi}
    p(\widetilde{f}_* \mid \vx_*, \D) \approx \N(f_\vmu(\vx_*), \mJ_*^\top \mSigma \mJ_* + k_* \mI) .
\end{equation}
We can do so since intuitively \eqref{eq:gp_posterior_multi} is simply obtained as a result of ``stacking'' $C$ independent $\widetilde{f}^{(c)}_*$'s, each of which satisfies \cref{prop:no_training}. The following lemma shows that asymptotically, the marginal variances of $\widetilde{f}_*$ grow cubically as we scale the test point.

\vspace{0.5em}
\begin{restatable}[Asymptotic Variance Growth]{lemma}{thmone}
    \label{prop:asymp_variance}
    Let $f: \R^N \times \R^D \to \R^C$ be a pre-trained ReLU network with posterior $\N(\vtheta \mid \vmu, \mSigma)$ and $\widetilde{f}$ be obtained from $f$ via RGPR. Suppose that the linearization of $f$ w.r.t. $\vtheta$ around $\vmu$ is employed. For any $\vx_* \in \R^N$ with $\vx_* \neq \vzero$ there exists $\beta > 0$ such that for any $\alpha \geq \beta$ and each $c = 1, \dots, C$, the variance $\Var(\widetilde{f}^{(c)}(\alpha \vx_*))$ under \eqref{eq:gp_posterior_multi} is in $\Theta(\alpha^3)$.
\end{restatable}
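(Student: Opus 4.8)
The plan is to read off the posterior variance of $\widetilde f_c(\alpha\vx_*)$ from \eqref{eq:gp_posterior_multi}, split it into the base‑BNN term and the DSCS‑kernel term, show the kernel term is \emph{exactly} cubic in $\alpha$ with a strictly positive leading coefficient while the BNN term is only quadratic, and then sandwich the sum between two positive multiples of $\alpha^3$ for all large $\alpha$. Concretely, since RGPR is applied to the input space only, \eqref{eq:gp_posterior_multi} gives, for the $c$-th output,
\begin{equation*}
    \var\,\widetilde f_c(\alpha\vx_*) = \big[\mJ_{\alpha\vx_*}^\top\mSigma\,\mJ_{\alpha\vx_*}\big]_{cc} + k_\leftrightarrow(\alpha\vx_*,\alpha\vx_*;\sigma_0^2),
\end{equation*}
where $\mJ_{\alpha\vx_*}$ is the Jacobian of $f$ at $\alpha\vx_*$ w.r.t.\ $\vtheta$ evaluated at $\vmu$. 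For the kernel term, a short computation starting from the cubic‑spline kernel above (specializing $x'=x$, $c_{\mathrm{min}}=0$, and summing the left/right parts) yields $k_\leftrightarrow^1(x,x;\sigma^2)=\tfrac{\sigma^2}{3}\abs{x}^3$, hence $k_\leftrightarrow(\alpha\vx_*,\alpha\vx_*;\sigma_0^2)=\tfrac{\sigma_0^2}{3N}\sum_{i=1}^N\abs{\alpha x_{*i}}^3 = C_*\,\alpha^3$ for $\alpha>0$, with $C_*:=\tfrac{\sigma_0^2}{3N}\sum_{i=1}^N\abs{x_{*i}}^3>0$ because $\vx_*\neq\vzero$. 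Since $\mSigma$ is positive semidefinite, the base term is nonnegative, so this already gives the lower bound $\var\,\widetilde f_c(\alpha\vx_*)\ge C_*\alpha^3$.

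For the matching upper bound I would invoke the piecewise‑affine structure of ReLU networks exactly as in \citet{hein2019relu} and \citet{kristiadi2020being}: for any $\vx_*\neq\vzero$ there is $\beta>0$ such that for all $\alpha\ge\beta$ the point $\alpha\vx_*$ lies in a single unbounded linear region of $f(\cdot\,;\vmu)$, whose activation pattern is (for $\vmu$ in general position) locally constant in $\vtheta$ near $\vmu$. On that region $f_c(\vx;\vtheta)=\vw_c(\vtheta)^\top\vx + b_c(\vtheta)$ with $\vw_c,b_c$ polynomial in $\vtheta$, so $\nabla_\vtheta f_c(\alpha\vx_*;\vtheta)\vert_\vmu = \alpha\,\vu_c + \vv_c$ with $\vu_c:=\nabla_\vtheta\big(\vw_c(\vtheta)^\top\vx_*\big)\vert_\vmu$ and $\vv_c:=\nabla_\vtheta b_c(\vtheta)\vert_\vmu$ both independent of $\alpha$. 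Consequently $\big[\mJ_{\alpha\vx_*}^\top\mSigma\,\mJ_{\alpha\vx_*}\big]_{cc}=(\alpha\vu_c+\vv_c)^\top\mSigma(\alpha\vu_c+\vv_c)$ is a polynomial of degree at most $2$ in $\alpha$ — this is precisely the ``variance at most quadratic in $\alpha$'' phenomenon underlying the loose bound of \citet{kristiadi2020being}. Hence there is $C'$ with $\big[\mJ_{\alpha\vx_*}^\top\mSigma\,\mJ_{\alpha\vx_*}\big]_{cc}\le C'\alpha^2$ for $\alpha\ge 1$, and for $\alpha\ge\max(\beta,1)$ we get $C_*\alpha^3\le\var\,\widetilde f_c(\alpha\vx_*)\le (C_*+C')\alpha^3$, i.e.\ $\var\,\widetilde f_c(\alpha\vx_*)\in\Theta(\alpha^3)$, uniformly over $c=1,\dots,C$.

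I expect the only non‑routine step to be the second paragraph: lifting the ``asymptotically linear region'' argument from the fixed‑parameter analysis of \citet{hein2019relu} to a statement about the \emph{parameter gradient} at $\vmu$. The care is in simultaneously (a) choosing $\beta$ so the activation pattern at $\alpha\vx_*$ has stabilized for all $\alpha\ge\beta$, and (b) ensuring that pattern is locally constant in $\vtheta$ at $\vmu$ so that $\nabla_\vtheta f_c(\alpha\vx_*;\vtheta)\vert_\vmu$ genuinely has the affine‑in‑$\alpha$ form $\alpha\vu_c+\vv_c$ (which requires either a genericity assumption on $\vmu$, or an argument that the relevant measure‑zero parameter set can be avoided). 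Everything else — the closed form $k_\leftrightarrow^1(x,x;\sigma^2)=\tfrac{\sigma^2}{3}\abs{x}^3$, positivity of $C_*$, and the final two‑sided bound — is elementary.
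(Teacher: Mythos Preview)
Your proposal is correct and follows essentially the same route as the paper's proof: decompose the variance into the base-BNN Jacobian term and the DSCS-kernel term, invoke the linear-region lemma of \citet{hein2019relu} to show the Jacobian term is quadratic in $\alpha$, and compute the kernel term as cubic with a positive coefficient. The genericity concern you flag in (b) is a point the paper simply glosses over (it tacitly treats $\nabla_\vtheta f_c(\alpha\vx_*;\vtheta)\vert_\vmu$ as well-defined and affine in $\alpha$ once the linear region is fixed), so you need not worry about supplying extra justification beyond what the paper does --- and your use of $\abs{x_{*i}}^3$ rather than $x_{*i}^3$ is in fact more careful than the paper's own write-up.
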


Equipped with this result, we are now ready to state our main result. The following theorem shows that RGPR yields the ideal asymptotic uniform confidence of $1/C$ given any pre-trained ReLU classification BNN with an arbitrary number of classes.

\vspace{0.5em}
\begin{restatable}[Uniform Asymptotic Confidence]{theorem}{thmtwo}
    \label{thm:asymp_conf}
    Let $f: \R^N \times \R^D \to \R^C$ be a $C$-class pre-trained ReLU network equipped with the posterior $\N(\vtheta \mid \vmu, \mSigma)$ and let $\widetilde{f}$ be obtained from $f$ via RGPR. Suppose that the linearization of $f$ and the generalized probit approximation \eqref{eq:generalized_probit} is used for approximating the predictive distribution $p(y_* = c \mid \alpha \vx_*, \widetilde{f}, \D)$ under $\widetilde{f}$. For any input $\vx_* \in \R^N$ with $\vx_* \neq \vzero$ and for every class $c = 1, \dots, C$, we have $\lim_{\alpha \to \infty} \, p(y_* = c \mid \alpha \vx_*, \widetilde{f}, \D) = 1/C$.
\end{restatable}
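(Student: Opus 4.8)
The plan is to substitute the generalized probit approximation \eqref{eq:generalized_probit} into the predictive distribution of $\widetilde f$ and show that, along the ray $\alpha\vx_*$, every scaled logit vanishes as $\alpha\to\infty$, so that the resulting softmax tends to the uniform distribution. Writing the class-$c$ probability as $p(y_*=c\mid\alpha\vx_*,\widetilde f,\D)\approx\exp(m_{*c}(\alpha)\,\kappa_{*c}(\alpha))\big/\sum_{i=1}^C\exp(m_{*i}(\alpha)\,\kappa_{*i}(\alpha))$, I read off from \eqref{eq:gp_posterior_multi} and \Cref{prop:invariance} that $m_{*c}(\alpha)=[f_\vmu(\alpha\vx_*)]_c$ (RGPR leaves the mean untouched) and $\kappa_{*c}(\alpha)=(1+\tfrac{\pi}{8}\,v_{*cc}(\alpha))^{-1/2}$ with $v_{*cc}(\alpha)=[\mJ_{\alpha\vx_*}^\top\mSigma\,\mJ_{\alpha\vx_*}]_{cc}+k_\leftrightarrow(\alpha\vx_*,\alpha\vx_*)$. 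The theorem then reduces to two asymptotic estimates: a linear upper bound $|m_{*c}(\alpha)|\in O(\alpha)$ on the mean logits, and a cubic lower bound $v_{*cc}(\alpha)\in\Omega(\alpha^3)$ on the variances; together they give $|m_{*c}(\alpha)\,\kappa_{*c}(\alpha)|\in O(\alpha^{-1/2})\to 0$.

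For the mean bound I would invoke the piecewise-affine structure of ReLU networks used by \citet{hein2019relu} and \citet{kristiadi2020being}: the map $f_\vmu$ partitions $\R^N$ into finitely many polyhedra on each of which it is affine, so the ray $\{\alpha\vx_*:\alpha>0\}$ eventually enters and remains in a single such region; hence there exist $\beta_1>0$ and vectors $\va,\vb\in\R^C$ with $f_\vmu(\alpha\vx_*)=\alpha\,\vb+\va$ for all $\alpha\ge\beta_1$, whence $|m_{*c}(\alpha)|\le c_1\alpha$. For the variance bound I would simply cite \Cref{prop:asymp_variance}, which already establishes $v_{*cc}(\alpha)\in\Theta(\alpha^3)$; only its lower part $v_{*cc}(\alpha)\ge c_2\alpha^3$ (some $c_2>0$, $\alpha\ge\beta_2$) is needed here, and it is transparent anyway since $[\mJ^\top\mSigma\mJ]_{cc}\ge 0$ and $k_\leftrightarrow(\alpha\vx_*,\alpha\vx_*)$ is a strictly positive constant times $\alpha^3$ whenever $\vx_*\neq\vzero$.

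Combining, for $\alpha\ge\max(\beta_1,\beta_2)$ one gets $\kappa_{*c}(\alpha)\le(\tfrac{\pi}{8}c_2)^{-1/2}\alpha^{-3/2}$, hence $|m_{*c}(\alpha)\,\kappa_{*c}(\alpha)|\le c_3\,\alpha^{-1/2}$ for every $c$, so each scaled logit tends to $0$. By continuity of $\exp$, $\exp(m_{*c}(\alpha)\,\kappa_{*c}(\alpha))\to 1$ for all $c$, and therefore $p(y_*=c\mid\alpha\vx_*,\widetilde f,\D)\to 1/\sum_{i=1}^C 1=1/C$, as claimed. The main obstacle is the mean-growth step: one must argue carefully that the ray ultimately lies within a \emph{single} affine region of $f_\vmu$, so that the logits are genuinely affine in $\alpha$ rather than merely piecewise affine with slopes that could in principle keep changing; this is a standard property of ReLU networks but is the one place where more than a routine substitution is required. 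Given the two rates, the rest is elementary.
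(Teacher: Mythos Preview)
Your proposal is correct and follows essentially the same route as the paper: invoke the single-linear-region lemma of \citet{hein2019relu} to obtain $m_{*c}(\alpha)\in O(\alpha)$, cite \Cref{prop:asymp_variance} for $v_{*cc}(\alpha)\in\Theta(\alpha^3)$, and conclude that each scaled logit $m_{*c}\kappa_{*c}\in O(\alpha^{-1/2})\to 0$ so the softmax collapses to $1/C$. The paper phrases the limit via the differences $z_{ic}=m_i\kappa_i-m_c\kappa_c$ rather than bounding each $m_c\kappa_c$ directly, but this is a cosmetic distinction; your observation that only the \emph{upper} bound $O(\alpha)$ on the mean is needed (not the full $\Theta(\alpha)$ the paper states) is in fact slightly cleaner.
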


As a sketch of the proof for this theorem, consider the special case of binary classification. Here, we notice that the variance $v$ in the probit approximation \eqref{eq:probit_binary} is now a cubic function of $\alpha$ under RGPR, due to \cref{prop:asymp_variance}. Thus, it is easy to see that the term inside of $\sigma$ decays like $1/\sqrt{\alpha}$ far away from the training data. Therefore, in this case, $p(y = 1 \mid \alpha \vx_*, \D)$ evaluates to $\sigma(0) = 1/2$ as $\alpha \to \infty$, and hence we obtain the asymptotic maximum entropy prediction.

We remark that the pre-trained assumption on $f$ in \cref{prop:asymp_variance} and \cref{thm:asymp_conf} can be removed. Intuitively, this is because under the scaling of $\alpha$ on $\vx_*$, the last term of \eqref{eq:gp_post_realval_var} is in $\Theta(\alpha^2)$. Thus, it is asymptotically dominated by the $\Theta(\alpha^3)$ growth induced by the DSCS kernel in the second term. We however present the statements as they are since they support the \emph{post-hoc} spirit of RGPR.

\begin{wrapfigure}[13]{r}{0.3\textwidth}
    \centering

    \vspace{-2em}

    \hspace{-1.75em}
    \begin{minipage}{0.85\linewidth}
        \subfloat[Input Only]{\includegraphics[width=\linewidth, height=0.08\textheight]{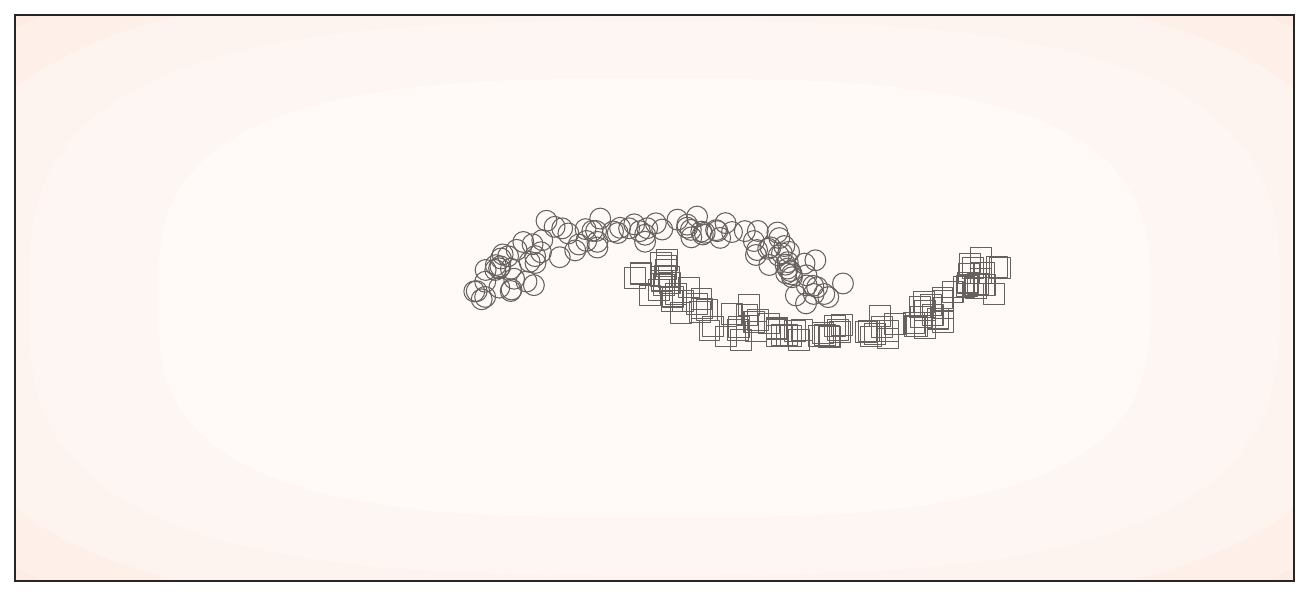}}
    \end{minipage}
    \hspace{-1em}
    \begin{minipage}{0.08\linewidth}
        \vspace{-1.255em}
        \subfloat{\includegraphics{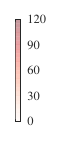}}
    \end{minipage}

    \setcounter{subfigure}{1}

    \vspace{0em}

    \hspace{-1.75em}
    \begin{minipage}{0.85\linewidth}
        \subfloat[Input \& Hiddens]{\includegraphics[width=\linewidth, height=0.08\textheight]{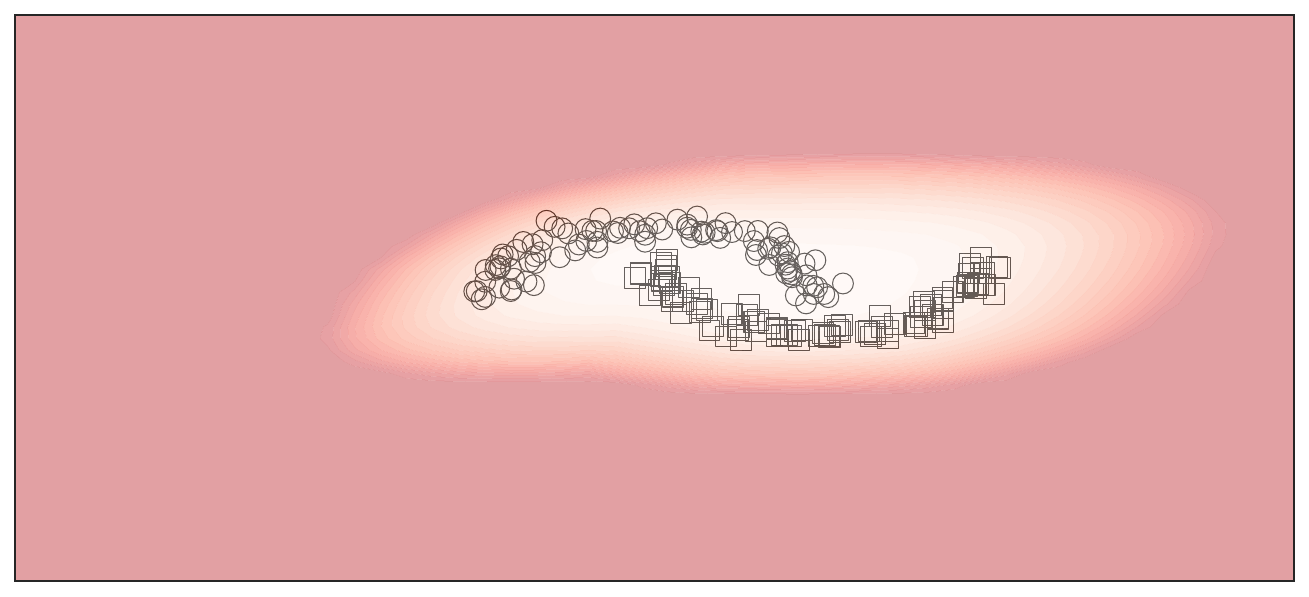}}
    \end{minipage}
    \hspace{-1em}
    \begin{minipage}{0.08\linewidth}
        \vspace{-1.255em}
        \subfloat{\includegraphics{kernel_contour_colorbar}}
    \end{minipage}

    \caption{Variance of $\widehat{f}$.}

    \label{fig:kernel_contours}
\end{wrapfigure}

\subsection{Extending RGPR to Non-Asymptotic Regimes}

While the previous construction is sufficient for modeling uncertainty far away from the data, it does not necessarily model the uncertainty \emph{near} the data region well. \Cref{fig:kernel_contours}(a) shows this behavior: the variance of the GP prior equipped with the DSCS kernel grows slowly around the data and hence, even though \cref{thm:asymp_conf} will still apply in the limit, RGPR has a minimal effect on the uncertainty of the BNN in non-asymptotic regimes.





A way to address this is to adapt RGPR's notion of proximity between input points. This can be done by using the higher-level data representations already available from the pre-trained NN---a test point close to the data in the input space can be far from them in the representation space, thereby the DSCS kernel might assign a large variance. Based on this intuition, we extend RGPR by additionally placing infinite ReLU features on the representation spaces of the point-estimated network $f_\vmu$ induced by the BNN $f$, where $\vmu$ is the mean of the Gaussian posterior of $f$, as follows.

For each $l = 1, \dots, L-1$ and any input $\vx_*$, let $N_l$ be the size of the $l$-th hidden layer of $f_\vmu$ and $\vh^{(l)}_*$ be the $l$-th hidden representation of $\vx_*$. By convention, we assume that $N_0 := N$ and $\vh^{(0)}_* := \vx_*$. Now, we place for each $l = 0, \dots, L-1$ an infinite number of ReLU features on the representation space $\R^{N_l}$, and thus we obtain a random function $\widehat{f}^{(l)}: \R^{N_l} \to \R$ distributed as $\mathcal{GP}(0, k)$. Then, given that $\widehat{N} := \sum_{l=0}^{L-1} N_l$, we define $\widehat{f}: \R^{\widehat{N}} \to \R$ by $\widehat{f} := \widehat{f}^{(0)} + \dots + \widehat{f}^{(L-1)}$, i.e. we assume that $\{\widehat{f}^{(l)}\}_{l=0}^{L-1}$ are independent. This function is therefore a function over \emph{all} representation (including the input) spaces of $f_\vmu$, distributed as the additive Gaussian process $\mathcal{GP}(0, \sum_{l=0}^{L-1} k)$. In other words, given all representations $\vh_* := (\vh^{(l)}_*)_{l=0}^{L-1}$ of $\vx_*$ under $f_\vmu$, the marginal over the function output $\widehat{f}(\vh_*)$ is given by
\begin{equation} \label{eq:gp_prior}
    p(\widehat{f}_*) = \N \left( 0, \sum_{l=0}^{L-1} k \left( \vh^{(l)}_*, \vh^{(l)}_*; \sigma_l^2 \right) \right) .
\end{equation}
We can then use this definition of $\widehat{f}$ as a drop-in replacement in \eqref{eq:augmented_output} to define RGPR. \Cref{fig:kernel_contours}(b) visualizes the effect: the low-variance region modeled by $\widehat{f}$ becomes more compact around the data.

The analysis from the previous section still applies here since it is easy to see that the variance of $\widehat{f}_*$ in \eqref{eq:gp_prior} is still cubic in $\alpha$. In practice, however, it is not necessarily true anymore that each $\vh_*^{(l)}$ is close to the origin in $\R^{N_l}$. To fix this, one can center and scale each $\vh_*^{(l)}$ via standardization using the mean $\E_{\vx \in \D}(\vh^{(l)}(\vx))$ and scaled standard deviation $r \sqrt{\Var_{\vx \in \D}(\vh^{(l)}(\vx))}$ with $r > 1$, before evaluating the kernel in \eqref{eq:gp_prior} (these quantities only need to be computed once). Note that by tuning the DSCS kernel's hyperparameter $\sigma^2$ such that confidence over the training data is preserved (cf.~the next section), RGPR becomes insensitive to the choice of $r$ since intuitively the tuning procedure will make sure that the DSCS kernel does not assign large variance to the training data. Therefore, in practice we set $r = 1$.

\Cref{alg:relu_gp_mc} provides a pseudocode of RGPR for classification predictions via MC-integration. The only overhead compared to the usual MC-integrated BNN prediction step are (marked in red) (i) a single additional forward-pass over $f_\vmu$, (ii) $L$ evaluations of the DSCS kernel $k$, and (iii) sampling from a $C$-dimensional diagonal Gaussian. Their costs are negligible compared to the cost of obtaining the standard MC-prediction of $f$, which, in particular, requires multiple forward passes.

\begin{wrapfigure}[18]{r}{0.5\textwidth}
    \vspace{-2em}

    \begin{minipage}{0.5\textwidth}
        \begin{algorithm}[H]
            \small

            \caption{MC-prediction for RGPR. Differences from the standard procedure are in \textbf{{\color{red} red}}.}
            \label{alg:relu_gp_mc}

            \begin{algorithmic}[1]
                \Require
                    \Statex Pre-trained $L$-layer, ReLU BNN classifier $f$ with posterior $\N(\vtheta \mid \vmu, \mSigma)$. Test point $\vx_* \in \R^N$. Centering and scaling function $\mathtt{std}$. Hyperparameters $(\sigma_l^2)_{l=0}^{L-1}$. Number of MC samples $S$.


                \State {\color{red} $( \vh^{(l)}_* )_{l=1}^{L-1} = \mathtt{forward}(f_\vmu, \vx_*)$}  
                \State {\color{red} $v_s(\vx_*) = \sum_{l=0}^{L-1} k(\mathtt{std}(\vh_*^{(l)}), \mathtt{std}(\vh_*^{(l)}); \sigma_l^2)$}  

                \For{$s = 1, \dots, S$}
                    \State $\vtheta_s \sim \N(\vtheta \mid \vmu, \mSigma)$  
                    \State $\vf_s(\vx_*) = f(\vx_*; \vtheta_s)$  
                    \State {\color{red} $\widehat{f}_s(\vx_*) \sim \N (\vzero, v_s(\vx_*) \mI )$}  
                    \State {\color{red} $\widetilde{f}_s(\vx_*) = f_s(\vx_*) + \widehat{f}_s(\vx_*)$}  
                \EndFor

                \State \Return $1/S \sum_{s=1}^S \softmax({\color{red} \widetilde{f}_s(\vx_*)})$

            \end{algorithmic}
        \end{algorithm}
    \end{minipage}

\end{wrapfigure}

\subsection{Hyperparameter Tuning}
\label{subsec:hyperparam_tuning}

The kernel hyperparameters $(\sigma^2_l)_{l=0}^{L-1} =: \vsigma^2$ control the variance growth of the DSCS kernel.
Since RGPR is a GP model, one way to tune $\vsigma^2$ is via marginal likelihood maximization. However, this leads to an expensive procedure even if a stochastic approximation \citep{hensman2015scalable} is employed since the computation of the RGPR kernel \eqref{eq:gp_posterior_multi} requires the network's Jacobian and the explicit kernel matrix need to be formed. Note however that those quantities are not needed for the computation of the predictive distribution via MC-integration (\cref{alg:relu_gp_mc}). Hence, a cheaper yet still valid option to tune $\vsigma^2$ is to use a cross-validation (CV) which depends only on predictions over validation data $\D_\text{val}$ \citep[Ch.~5]{rasmussen2003gaussian}.

A straightforward way to perform CV is by maximizing the validation log-likelihood (LL). That is, we maximize the objective $\L_\text{LL}(\vsigma^2) := \sum_{\vx_*, y_* \in \D_\text{val}} \log p(y_* \mid \vx_*, \D; \vsigma^2)$. However, this tends to yield overconfident results outside the training data (\cref{fig:tuning}). Thus, similar to \citet{kristiadi2020being}, we can optionally add an auxiliary term to $\L_\text{LL}$ that depends on some OOD dataset $\D_\text{out}$, resulting in $\L_\text{OOD}(\vsigma^2) := \L_\text{LL}(\vsigma^2) + \lambda/C \sum_{\vx_* \in \D_\text{out}} \sum_{c=1}^C \log p(y = c \mid \vx_*, \D; \vsigma^2)$. In particular, the additional term is simply the negative cross-entropy between the predictive distribution and the uniform probability vector of length $C$, with $\lambda = 0.5$ as proposed by \citet{hendrycks2018deep}. Note that both objectives can be optimized via gradient descent without the need of backprop through the network. See \cref{fig:tuning} for comparison between different objectives. In \cref{sec:experiments}, we discuss the choice of $\D_\text{out}$.

\begin{figure*}[t]
    \centering

    \begin{minipage}{0.92\textwidth}
        \subfloat[Untuned]{\includegraphics[width=0.24\linewidth, height=0.08\textheight]{toy_moon_laplace.pdf}}
        \hfill
        \subfloat[Marg. Likelihood]{\includegraphics[width=0.24\linewidth, height=0.08\textheight]{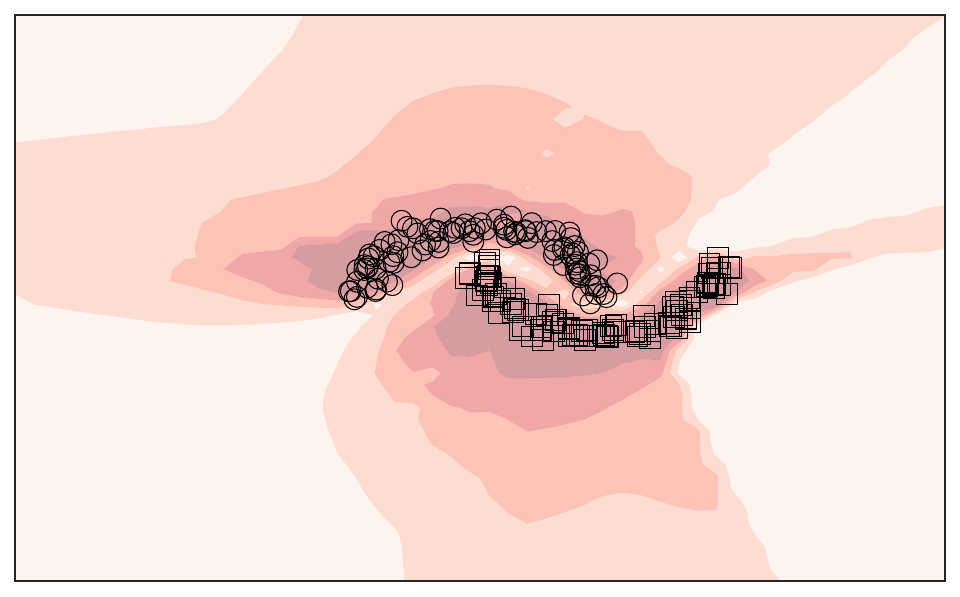}}
        \hfill
        \subfloat[CV with $\L_\text{LL}$]{\includegraphics[width=0.24\linewidth, height=0.08\textheight]{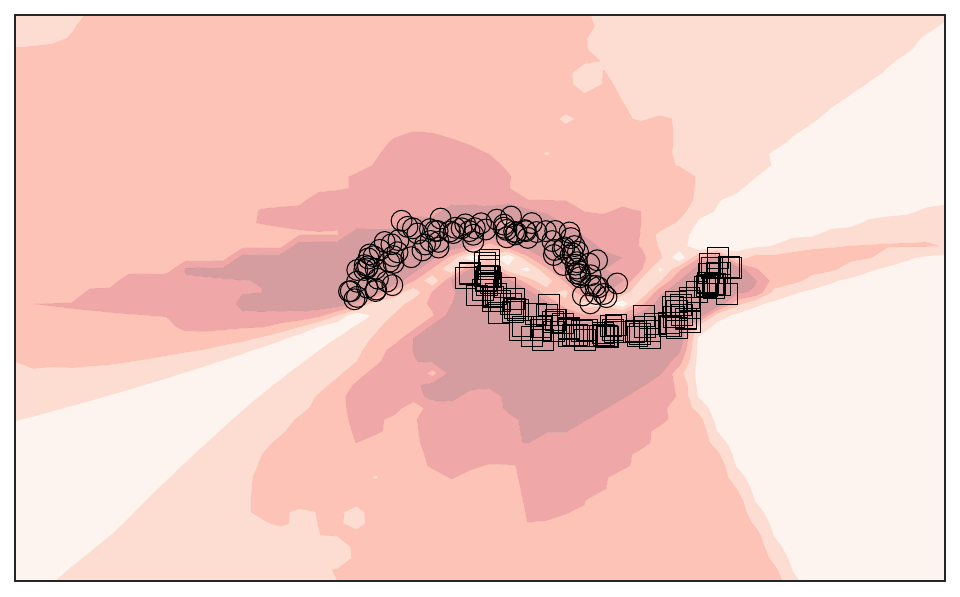}}
        \hfill
        \subfloat[CV with $\L_\text{OOD}$]{\includegraphics[width=0.24\linewidth, height=0.08\textheight]{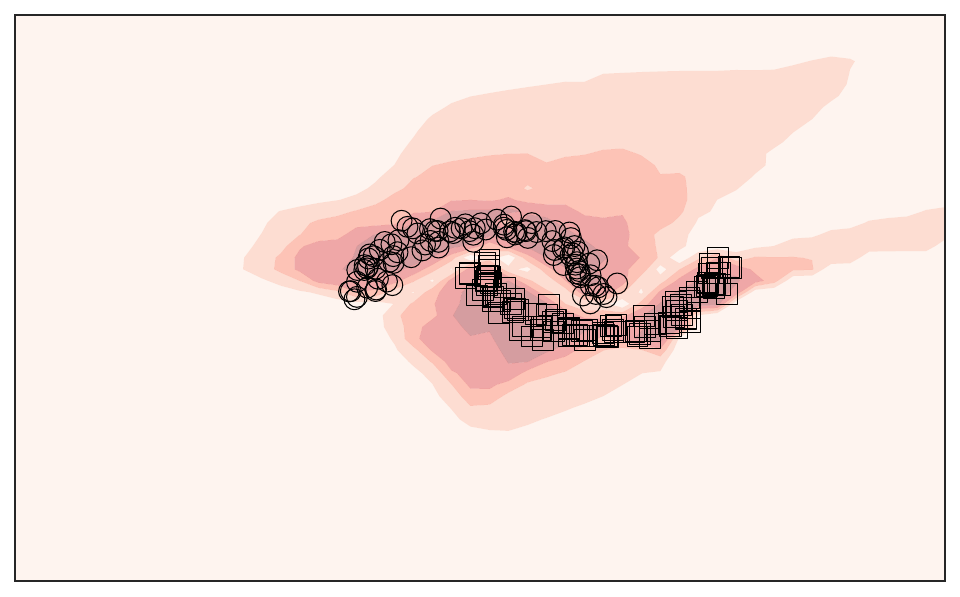}}
    \end{minipage}
    \hspace{-0.5em}
    \begin{minipage}{0.06\textwidth}
        \vspace{-0.275em}
        \includegraphics{toy_moon_colorbar}
    \end{minipage}

    \caption{
        Different objectives for tuning $\vsigma^2$. Shades are predictive confidence. Untuned $\vsigma^2 = (1, \dots, 1)$ in \textbf{(a)}. $\D_\text{out}$ consists of uniform noise images.
    }

    \label{fig:tuning}
\end{figure*}

\section{Related work}
\label{sec:related}

Mitigation of asymptotic overconfidence has been studied recently: \citet{hein2019relu} noted, demonstrated, and analyzed this issue, but their proposed method does not work for large $\alpha$. \citet{kristiadi2020being} showed that a Bayesian treatment could mitigate this issue even as $\alpha \to \infty$. However, their analysis is restricted to binary classification and the asymptotic confidence of standard ReLU BNNs only converges to a constant in $(0,1)$. In a non-Bayesian framework, \citet{meinke2020towards} used density estimation to achieve the uniform confidence far away from the data. Nevertheless, this property has not been previously achieved in the context of BNNs.

Unlike a line of works that connects NNs and GPs \citep[etc.]{cho2009kernel,lee2018deepgp,khan2019approximate} which studies properties of NNs as GPs in an infinite-width limit, we focus on combining \emph{finite-width} BNNs with a GP \emph{a posteriori}. Though similar in spirit, our method thus differs from \citet{wilson2020efficiently} which propose a combination of a weight-space prior and a function-space posterior for efficient GP posterior sampling. Our method is also distinct from other methods that model the residual of a predictive model with a GP \citep[etc.]{blight1975bayesian,wahba1978improper,ohagan1978curve,qiu2020quantifying} since RGPR models the \emph{uncertainty residual} of BNNs, in contrast to the predictive residual of point-estimated networks, and RGPR does not require further posterior inference given a pre-trained BNN.

\citet{cho2009kernel} proposed a family of kernels for deep learning, called the arc-cosine kernels. The first-order arc-cosine kernel can be interpreted as a ReLU kernel but it only has a quadratic variance growth and thus is not suitable to guarantee the uniform asymptotic confidence. While higher-order arc-cosine kernels have super-quadratic variance growth, they ultimately cannot be interpreted as ReLU kernels, and hence are not as natural as the cubic-spline kernel in the context of ReLU BNNs.

\section{Empirical Evaluations}
\label{sec:experiments}

We empirically validate \cref{thm:asymp_conf} in the asymptotic regime and the effect of RGPR on non-asymptotic confidence estimates in multi-class image classification. The LeNet architecture \citep{lecun1998gradient} is used for MNIST, while ResNet-18 \citep{he2016deep} is used for CIFAR10, SVHN, and CIFAR100---details in \cref{app:add_experiments}. For each dataset, we tune $\vsigma^2$ via a validation set of size $2000$ obtained by splitting the corresponding test set. Following \citet{hein2019relu}, $\D_\text{out}$ consists of smoothed noise images, which are obtained via random permutation, blurring, and contrast rescaling of the original dataset---they do not preserve the structure of the original images and thus can be considered as synthetic noise images. Particularly for ResNet, we use the outputs of its residual blocks to obtain input representations $\vh_*$.

\subsection{Asymptotic Regime}

In this experiment, we use the last-layer Laplace approximation (LLL) as the base BNN, which has previously been shown to be a strong baseline \citep{kristiadi2020being}. Results with other, more sophisticated BNNs \citep{ritter_scalable_2018,maddox2019simple,wilson2016stochastic} are in \cref{app:add_experiments}---we observe similar results there. \Cref{fig:exp_far_away} shows confidence estimates of both the BNN and the RGPR-imbued BNN over $1000$ samples obtained from each of MNIST, CIFAR10, SVHN, and CIFAR100 test sets, as the scaling factor $\alpha$ increases. As expected, the vanilla BNN does not achieve the ideal uniform confidence prediction, even for large $\alpha$. This issue is most pronounced on MNIST, where the confidence estimates are far away from the ideal confidence of $0.1$. Overall, this observation validates the hypothesis that BNNs have residual uncertainty, leading to asymptotic overconfidence that can be severe. We confirm that RGPR fixes this issue. Moreover, its convergence appears at a finite, small $\alpha$; without a pronounced effect on the original confidence.

\begin{figure*}[t!]
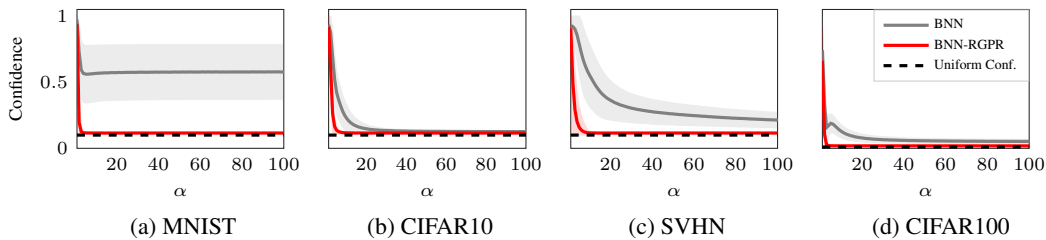

    \centering

    \hspace{2.3em}
    \subfloat[MNIST]{\input{figs/conf_lll_rgp_mnist}}
    \hfill
    \subfloat[CIFAR10]{\input{figs/conf_lll_rgp_cifar10}}
    \hfill
    \subfloat[SVHN]{\input{figs/conf_lll_rgp_svhn}}
    \hfill
    \subfloat[CIFAR100]{\input{figs/conf_lll_rgp_cifar100}}

    \caption{Confidence of a vanilla BNN (LLL) and the same BNN with RGPR, as a function of $\alpha$. Test data are constructed by scaling the original test set. Curves are means, shades are $\pm$1 std. devs. Note that in \textbf{(b)} and \textbf{(d)}, even though close, the BNN does not achieve the uniform confidence.}
    \label{fig:exp_far_away}
\end{figure*}

\begin{figure}[t]
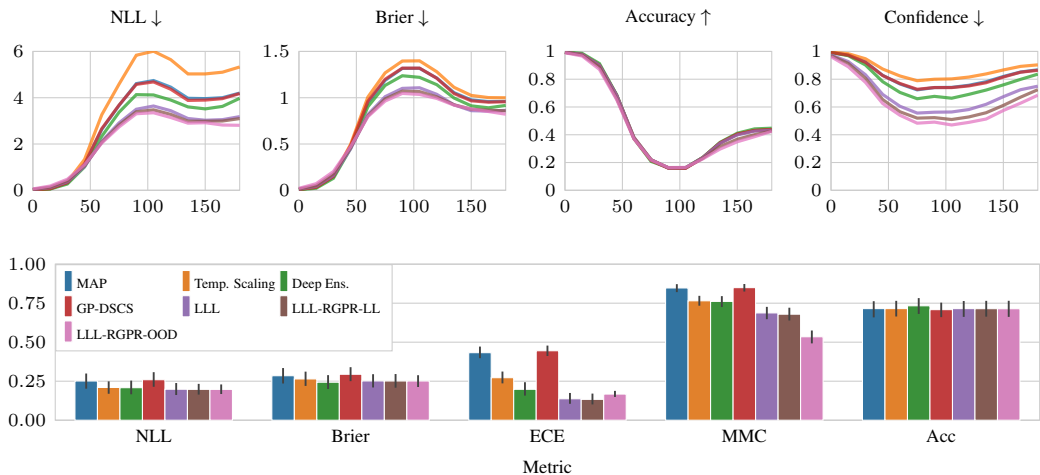

    \centering

    \subfloat{
        \input{figs/rotMNIST_nll.tex}
        \hfill
        \input{figs/rotMNIST_brier.tex}
        \hfill
\begin{tikzpicture}[baseline]

\tikzstyle{every node}=[font=\scriptsize]

\definecolor{color0}{rgb}{0.12156862745098,0.466666666666667,0.705882352941177}
\definecolor{color1}{rgb}{1,0.498039215686275,0.0549019607843137}
\definecolor{color2}{rgb}{0.172549019607843,0.627450980392157,0.172549019607843}
\definecolor{color3}{rgb}{0.83921568627451,0.152941176470588,0.156862745098039}
\definecolor{color4}{rgb}{0.580392156862745,0.403921568627451,0.741176470588235}
\definecolor{color5}{rgb}{0.549019607843137,0.337254901960784,0.294117647058824}
\definecolor{color6}{rgb}{0.890196078431372,0.466666666666667,0.76078431372549}

\begin{axis}[
width=0.31\textwidth,
height=0.15\textheight,
axis line style={white!80!black},
legend cell align={left},
legend style={nodes={scale=0.75, transform shape}, fill opacity=0.75, draw opacity=1, text opacity=1, at={(0,1)}, anchor=north west, draw=white!80!black},
tick align=inside,
title=Accuracy $\uparrow$,
x grid style={white!80!black},
xmajorgrids,
xmin=0, xmax=180,
xtick style={draw=none},
y grid style={white!80!black},
ymajorgrids,
ymin=0,
ymax=1,
ytick style={draw=none}
]
\path [draw=white, fill=color0, opacity=0.1]
(axis cs:0,0.9939)
--(axis cs:0,0.9939)
--(axis cs:15,0.9793)
--(axis cs:30,0.902)
--(axis cs:45,0.6851)
--(axis cs:60,0.3815)
--(axis cs:75,0.2197)
--(axis cs:90,0.1609)
--(axis cs:105,0.1607)
--(axis cs:120,0.2366)
--(axis cs:135,0.3411)
--(axis cs:150,0.4034)
--(axis cs:165,0.4294)
--(axis cs:180,0.4381)
--(axis cs:180,0.4381)
--(axis cs:180,0.4381)
--(axis cs:165,0.4294)
--(axis cs:150,0.4034)
--(axis cs:135,0.3411)
--(axis cs:120,0.2366)
--(axis cs:105,0.1607)
--(axis cs:90,0.1609)
--(axis cs:75,0.2197)
--(axis cs:60,0.3815)
--(axis cs:45,0.6851)
--(axis cs:30,0.902)
--(axis cs:15,0.9793)
--(axis cs:0,0.9939)
--cycle;

\path [draw=white, fill=color1, opacity=0.1]
(axis cs:0,0.9939)
--(axis cs:0,0.9939)
--(axis cs:15,0.9793)
--(axis cs:30,0.902)
--(axis cs:45,0.6851)
--(axis cs:60,0.3815)
--(axis cs:75,0.2197)
--(axis cs:90,0.1609)
--(axis cs:105,0.1607)
--(axis cs:120,0.2366)
--(axis cs:135,0.3411)
--(axis cs:150,0.4034)
--(axis cs:165,0.4294)
--(axis cs:180,0.4381)
--(axis cs:180,0.4381)
--(axis cs:180,0.4381)
--(axis cs:165,0.4294)
--(axis cs:150,0.4034)
--(axis cs:135,0.3411)
--(axis cs:120,0.2366)
--(axis cs:105,0.1607)
--(axis cs:90,0.1609)
--(axis cs:75,0.2197)
--(axis cs:60,0.3815)
--(axis cs:45,0.6851)
--(axis cs:30,0.902)
--(axis cs:15,0.9793)
--(axis cs:0,0.9939)
--cycle;

\path [draw=white, fill=color2, opacity=0.1]
(axis cs:0,0.9966)
--(axis cs:0,0.9966)
--(axis cs:15,0.986)
--(axis cs:30,0.913)
--(axis cs:45,0.6878)
--(axis cs:60,0.373)
--(axis cs:75,0.2077)
--(axis cs:90,0.163)
--(axis cs:105,0.1632)
--(axis cs:120,0.2362)
--(axis cs:135,0.3454)
--(axis cs:150,0.4112)
--(axis cs:165,0.4422)
--(axis cs:180,0.4473)
--(axis cs:180,0.4473)
--(axis cs:180,0.4473)
--(axis cs:165,0.4422)
--(axis cs:150,0.4112)
--(axis cs:135,0.3454)
--(axis cs:120,0.2362)
--(axis cs:105,0.1632)
--(axis cs:90,0.163)
--(axis cs:75,0.2077)
--(axis cs:60,0.373)
--(axis cs:45,0.6878)
--(axis cs:30,0.913)
--(axis cs:15,0.986)
--(axis cs:0,0.9966)
--cycle;

\path [draw=white, fill=color3, opacity=0.1]
(axis cs:0,0.993611010205144)
--(axis cs:0,0.993611010205144)
--(axis cs:15,0.978087335008386)
--(axis cs:30,0.90018)
--(axis cs:45,0.680631288084517)
--(axis cs:60,0.373610142886309)
--(axis cs:75,0.214622020410289)
--(axis cs:90,0.15843180652708)
--(axis cs:105,0.157866458434959)
--(axis cs:120,0.233577519231907)
--(axis cs:135,0.341527425223233)
--(axis cs:150,0.40478)
--(axis cs:165,0.431363380962103)
--(axis cs:180,0.437948941092855)
--(axis cs:180,0.437948941092855)
--(axis cs:180,0.438611058907145)
--(axis cs:165,0.431596619037897)
--(axis cs:150,0.40526)
--(axis cs:135,0.342112574776766)
--(axis cs:120,0.233902480768093)
--(axis cs:105,0.158293541565041)
--(axis cs:90,0.15892819347292)
--(axis cs:75,0.214817979589711)
--(axis cs:60,0.374309857113691)
--(axis cs:45,0.681328711915483)
--(axis cs:30,0.90026)
--(axis cs:15,0.978352664991614)
--(axis cs:0,0.993708989794856)
--cycle;

\path [draw=white, fill=color4, opacity=0.1]
(axis cs:0,0.993524593407715)
--(axis cs:0,0.993524593407715)
--(axis cs:15,0.97812052668078)
--(axis cs:30,0.899135622439927)
--(axis cs:45,0.682586772251144)
--(axis cs:60,0.37917085895849)
--(axis cs:75,0.219530671978773)
--(axis cs:90,0.160032357668844)
--(axis cs:105,0.161826526623005)
--(axis cs:120,0.234628581488996)
--(axis cs:135,0.337564795104908)
--(axis cs:150,0.398244569178858)
--(axis cs:165,0.425380909717869)
--(axis cs:180,0.431938781393294)
--(axis cs:180,0.431938781393294)
--(axis cs:180,0.433581218606706)
--(axis cs:165,0.426699090282131)
--(axis cs:150,0.399115430821142)
--(axis cs:135,0.339395204895092)
--(axis cs:120,0.236251418511004)
--(axis cs:105,0.163413473376995)
--(axis cs:90,0.162447642331156)
--(axis cs:75,0.220869328021227)
--(axis cs:60,0.38114914104151)
--(axis cs:45,0.685533227748856)
--(axis cs:30,0.901424377560074)
--(axis cs:15,0.97887947331922)
--(axis cs:0,0.993955406592285)
--cycle;

\path [draw=white, fill=color5, opacity=0.1]
(axis cs:0,0.99255408739718)
--(axis cs:0,0.99255408739718)
--(axis cs:15,0.972585584431364)
--(axis cs:30,0.886234314575051)
--(axis cs:45,0.658271913980981)
--(axis cs:60,0.374015180025927)
--(axis cs:75,0.217423666734722)
--(axis cs:90,0.161062335606274)
--(axis cs:105,0.161413523848412)
--(axis cs:120,0.228060847762161)
--(axis cs:135,0.314917372961403)
--(axis cs:150,0.367080285772619)
--(axis cs:165,0.399127553787571)
--(axis cs:180,0.425638519533232)
--(axis cs:180,0.425638519533232)
--(axis cs:180,0.427801480466768)
--(axis cs:165,0.402032446212429)
--(axis cs:150,0.368479714227381)
--(axis cs:135,0.316442627038598)
--(axis cs:120,0.230419152237839)
--(axis cs:105,0.162346476151588)
--(axis cs:90,0.163457664393726)
--(axis cs:75,0.219056333265278)
--(axis cs:60,0.375264819974073)
--(axis cs:45,0.661248086019019)
--(axis cs:30,0.887365685424949)
--(axis cs:15,0.973654415568636)
--(axis cs:0,0.99296591260282)
--cycle;

\path [draw=white, fill=color6, opacity=0.1]
(axis cs:0,0.990090121969362)
--(axis cs:0,0.990090121969362)
--(axis cs:15,0.966359600159872)
--(axis cs:30,0.869012125659617)
--(axis cs:45,0.651490035461439)
--(axis cs:60,0.369314592316458)
--(axis cs:75,0.215813273165842)
--(axis cs:90,0.162741699475574)
--(axis cs:105,0.162915440431973)
--(axis cs:120,0.224881693410631)
--(axis cs:135,0.294395160823755)
--(axis cs:150,0.348016575616572)
--(axis cs:165,0.38484)
--(axis cs:180,0.424246501670381)
--(axis cs:180,0.424246501670381)
--(axis cs:180,0.425593498329619)
--(axis cs:165,0.38692)
--(axis cs:150,0.350823424383428)
--(axis cs:135,0.299524839176245)
--(axis cs:120,0.227478306589369)
--(axis cs:105,0.166604559568027)
--(axis cs:90,0.164858300524426)
--(axis cs:75,0.219506726834158)
--(axis cs:60,0.370725407683542)
--(axis cs:45,0.654309964538561)
--(axis cs:30,0.872067874340383)
--(axis cs:15,0.967360399840128)
--(axis cs:0,0.990909878030639)
--cycle;

\addplot [very thick, color0, opacity=0.75]
table {%
0 0.9939
15 0.9793
30 0.902
45 0.6851
60 0.3815
75 0.2197
90 0.1609
105 0.1607
120 0.2366
135 0.3411
150 0.4034
165 0.4294
180 0.4381
};
\addplot [very thick, color1, opacity=0.75]
table {%
0 0.9939
15 0.9793
30 0.902
45 0.6851
60 0.3815
75 0.2197
90 0.1609
105 0.1607
120 0.2366
135 0.3411
150 0.4034
165 0.4294
180 0.4381
};
\addplot [very thick, color2, opacity=0.75]
table {%
0 0.9966
15 0.986
30 0.913
45 0.6878
60 0.373
75 0.2077
90 0.163
105 0.1632
120 0.2362
135 0.3454
150 0.4112
165 0.4422
180 0.4473
};
\addplot [very thick, color3, opacity=0.75]
table {%
0 0.99366
15 0.97822
30 0.90022
45 0.68098
60 0.37396
75 0.21472
90 0.15868
105 0.15808
120 0.23374
135 0.34182
150 0.40502
165 0.43148
180 0.43828
};
\addplot [very thick, color4, opacity=0.75]
table {%
0 0.99374
15 0.9785
30 0.90028
45 0.68406
60 0.38016
75 0.2202
90 0.16124
105 0.16262
120 0.23544
135 0.33848
150 0.39868
165 0.42604
180 0.43276
};
\addplot [very thick, color5, opacity=0.75]
table {%
0 0.99276
15 0.97312
30 0.8868
45 0.65976
60 0.37464
75 0.21824
90 0.16226
105 0.16188
120 0.22924
135 0.31568
150 0.36778
165 0.40058
180 0.42672
};
\addplot [very thick, color6, opacity=0.75]
table {%
0 0.9905
15 0.96686
30 0.87054
45 0.6529
60 0.37002
75 0.21766
90 0.1638
105 0.16476
120 0.22618
135 0.29696
150 0.34942
165 0.38588
180 0.42492
};
\end{axis}

\end{tikzpicture}
        \hfill
        \input{figs/rotMNIST_mmc.tex}
    }

    \subfloat{\input{figs/cifar10c_withci.tex}}

    \caption{\textbf{(Top)} Rotated-MNIST ($x$-axes are rotation angles). \textbf{(Bottom)} Corrupted-CIFAR10---values are normalized to $[0, 1]$ and are averages over all types of corruption and all severity levels.}

    \label{fig:dataset_shift}

    \vspace{-1em}
\end{figure}

\subsection{Non-Asymptotic Regime}
\label{subsec:exp_nonasymp}

\begin{wraptable}[12]{r}{0.5\textwidth}
    \vspace{-1em}

    \caption{OOD data detection in terms of FPR@95. All values are in percent and averages over five OOD test sets and over 5 prediction runs.}
    \label{tab:ood_avg}

    \centering
    \scriptsize
    \renewcommand{\tabcolsep}{4pt}

    \begin{tabular}{lcccc}
        \toprule

        \textbf{Methods} & \textbf{MNIST} & \textbf{CIFAR10} & \textbf{SVHN} & \textbf{CIFAR100} \\

        \midrule

        MAP & 28.2 & 38.9 & 17.8 & 72.2 \\
        TS & 28.4 & 34.9 & 17.6 & 71.9 \\
        DE & 23.0 & 51.0 & 11.3 & 74.7 \\
        GP-DSCS & 27.8 & 46.7 & 19.1 & 69.1 \\
        LLL & 24.8 & 29.8 & 15.7 & 69.5 \\
        \midrule
        LLL-RGPR-LL & 3.9 & 29.6 & 13.8 & 65.8 \\
        LLL-RGPR-OOD & \textbf{3.6} & \textbf{24.2} & \textbf{9.6} & \textbf{63.0} \\

        \bottomrule
    \end{tabular}
\end{wraptable}

We report results on standard dataset shift and out-of-distribution (OOD) detection tasks. For the former, we use the standard rotated-MNIST and CIFAR10-C datasets \citep{ovadia2019can,hendrycks2018benchmarking} and measure the performance using the following metrics: negative log-likelihood (NLL), the Brier score, expected calibration error (ECE), accuracy, and average confidence. Meanwhile, for OOD detection, we use five OOD sets for each in-distribution dataset. The FPR@95 metric measures the false positive rate of an OOD detector at a 95\% true positive rate. We use LLL as the base BNN for RGPR and compare it against the MAP-trained network, temperature scaling \citep[TS,][]{guo17calibration}, the method of \citet{qiu2020quantifying} with the DSCS kernel (GP-DSCS, see \cref{app:details}), and Deep Ensemble \citep[DE,][]{lakshminarayanan2017simple}, which is a strong baseline in this regime \citep{ovadia2019can}. We denote the RGPR tuned via $\L_\text{LL}$ and $\L_\text{OOD}$ with the suffixes ``-LL'' and ``-OOD'', respectively. More results are in \cref{app:add_experiments}.

On the rotated-MNIST benchmark, we observe in \cref{fig:dataset_shift} that RGPR consistently improves the base LLL, especially when tuned with $\L_\text{OOD}$, while still preserving the calibration of LLL on the clean data. LLL-RGPR attains better results than GP-DSCS, which confirms that applying a GP on top of a trained BNN is more effective than on top of MAP-trained nets. Some improvements, albeit less pronounced (see \cref{tab:cifar10c} in the appendix for the complementary numerical values), are also observed in CIFAR10-C. For OOD detection (\cref{tab:ood_avg}) we find that LLL is already competitive with all baselines, but RGPR can still improve it further, making it better than Deep Ensemble. Further results comparing RGPR to recent non-Bayesian baselines \citep{lee2018simple,van2020uncertainty} are in \cref{app:add_experiments}.

Finally, we discuss the limitation of $\L_\text{OOD}$. While the use of additional OOD data in tuning $\vsigma^2$ improves both dataset-shift and OOD detection results, it is not without a drawback: $\L_\text{OOD}$ induces slightly worse calibration in terms of ECE (\cref{tab:acc_ece} in \cref{app:add_experiments}). This implies that one can somewhat trade the exactness of RGPR (as assumed by \cref{prop:no_training}) off with better OOD detection. This trade-off is expected to a degree since OOD data are often close to the training data. Hence, the single multiplicative hyperparameter $\sigma^2_l$ of each the DSCS kernel in \eqref{eq:gp_prior} cannot simultaneously induce high variance on outliers and low variance on the nearby training data. \Cref{tab:ood_imagenet} (\cref{app:add_experiments}) corroborates this: When a $\D_\text{out}$ ``closer'' to the training data (the 32$\times$32 ImageNet dataset \citep{chrabaszcz2017downsampled}) is used, the ECE values induced by $\L_\text{OOD}$ become worse (but the OOD performance improves further). Note that this negative correlation between ECE and OOD detection performance also presents in state-of-the-art OOD detectors (\cref{app:subsubsec:calib_ood}). So, if the in-distribution calibration performance is more crucial in applications of interest, $\L_\text{LL}$ is a better choice for tuning $\vsigma^2$ since it still gives benefits on non-asymptotic outliers, but preserves calibration better than $\L_\text{OOD}$.

\section{Conclusion}
\label{sec:conclusion}

Extending finite ReLU BNNs with an infinite set of additional, carefully placed ReLU features fixes their asymptotic overconfidence. We do so by generalizing the classic cubic spline kernel, which, when used in a GP prior, yields a marginal variance growing cubically in the distance between a test point and the training data. The simplicity of our method is its main strength: RGPR causes no additional overhead during BNNs' training, but nevertheless meaningfully approximates a full GP posterior, because the proposed kernel contributes only negligible prior variance near the training data. RGPR can thus be applied \emph{post-hoc} to any pre-trained ReLU BNN and causes only a small overhead during prediction. We also showed how RGPR can be extended further---again in a \emph{post-hoc} manner---to also correct the BNN's uncertainty \emph{near} the training data, by modeling residuals in the higher layers of the network. The intuition behind RGPR is relatively simple, but it bridges the domains of deep learning and non-parametric/kernel models: Correctly modeling uncertainty across the input domain requires a non-parametric model of infinitely many ReLU features, but only finitely many such features need to be trained to make good point predictions.


\begin{ack}
  The authors gratefully acknowledge financial support by the European Research Council through ERC StG Action 757275 / PANAMA; the DFG Cluster of Excellence ``Machine Learning - New Perspectives for Science'', EXC 2064/1, project number 390727645; the German Federal Ministry of Education and Research (BMBF) through the T\"{u}bingen AI Center (FKZ: 01IS18039A); and funds from the Ministry of Science, Research and Arts of the State of Baden-W\"{u}rttemberg. AK is grateful to the International Max Planck Research School for Intelligent Systems (IMPRS-IS) for support. AK is also grateful to Felix Dangel, Jonathan Wenger, Nathanael Bosch, Runa Eschenhagen, Christian Fr\"{o}hlich, and other members of the Methods of Machine Learning group for feedback.
\end{ack}

\clearpage

{
    \small
    \bibliography{main}
    \bibliographystyle{unsrtnat}
}

\clearpage

\begin{appendices}
    \onecolumn
    \crefalias{section}{appendix}
    \setcounter{page}{1}

    \begin{center}
        \Large \bf
        An Infinite-Feature Extension for Bayesian ReLU Nets That Fixes Their Asymptotic Overconfidence
    \end{center}

    \vspace{2em}

    \section{The Cubic Spline Kernel}
    \label{app:derivations}


Recall that we have a linear model $f: [c_\text{min}, c_\text{max}] \times \R^K \to \R$ with the ReLU feature map $\vphi$ defined by $f(x; \vw) := \vw^\top \vphi(x)$ over the input space $[c_\text{min}, c_\text{max}] \subset \R$, where $c_\text{min} < c_\text{max}$. Furthermore, $\vphi$ regularly places the $K$ generalized ReLU functions centered at $(c_i)_{i=1}^K$ where $c_i=c_\mathrm{min}+ \frac{i-1}{K-1}(c_\mathrm{max}-c_\mathrm{min})$ in the input space, and we consider a Gaussian prior $p(\vw) := \N \left( \vw \,\middle|\, \vzero, \sigma^2 K^\inv (c_\text{max} - c_\text{min}) \mI \right)$ over the weight $\vw$. Then, as $K$ goes to infinity, the distribution over the function output $f(x)$ is a Gaussian process with mean $0$ and covariance
\begin{align}
    \label{eq:cov_finite}
    \mathrm{cov}(f(x),f(x')) &= \sigma^2 \frac{c_\text{max} - c_\text{min}}{K} \vphi(x)^\top \vphi(x') = \sigma^2 \frac{c_\text{max} - c_\text{min}}{K} \sum_{i=1}^K \ReLU(x; c_i) \ReLU(x'; c_i) \nonumber \\
        &= \sigma^2 \frac{c_\text{max} - c_\text{min}}{K} \sum_{i=1}^K H(x - c_i) H(x' - c_i) (x - c_i) (x' - c_i) \nonumber  \\
        &= \sigma^2 \frac{c_\text{max} - c_\text{min}}{K} \sum_{i=1}^K H(\min(x, x') - c_i) \left( c_i^2 - c_i (x + x') + xx' \right) ,
\end{align}
where the last equality follows from (i) the fact that both $x$ and $x'$ must be greater than or equal to $c_i$, and (ii) by expanding the quadratic form in the second line.

Let $\bar{x} := \min(x, x')$. Since \eqref{eq:cov_finite} is a Riemann sum, in the limit of $K \to \infty$, it is expressed by the following integral
\begin{align*}
    \label{eq:cubic_spline}
    \lim_{K \to \infty} \mathrm{cov}(f(x), &f(x')) = \sigma^2 \int_{c_\text{min}}^{c_\text{max}} H(\bar{x} - c) \left( c^2 - c (x + x') + xx' \right) \, dc \nonumber \\
        &= \sigma^2 H(\bar{x} - c_{\textrm{min}}) \int_{c_\text{min}}^{\min\{\bar{x},c_\mathrm{max}\}} c^2 - c (x + x') + xx' \, dc \nonumber \\
         &= \sigma^2 H(\bar{x} - c_{\textrm{min}}) \left[ \frac{1}{3} (z^3 - c_\text{min}^3) - \frac{1}{2} (z^2 - c_\text{min}^2)(x + x') + (z - c_\text{min}) xx' \vphantom{\frac{1}{3}} \right] \nonumber \\
\end{align*}
where we have defined $z := \min\{\bar{x},c_\mathrm{max}\}$. The term $H(\bar{x} - c_{\textrm{min}})$ has been added in the second equality as the previous expression is zero if $\bar{x}\leq c_{\textrm{min}}$ (since in this region, all the ReLU functions evaluate to zero). Note that
\[
    H(\bar{x} - c_{\textrm{min}}) =H(x - c_{\textrm{min}})H(x' - c_{\textrm{min}})
\]
is itself a positive definite kernel. We also note that $c_{\textrm{max}}$ can be chosen sufficiently large so that $[-c_{\textrm{max}},c_{\textrm{max}}]^d$ contains the data for sure, e.g.~this is anyway true for data from bounded domains like images in $[0,1]^d$, and thus we can set $z=\bar{x}=\min(x,x')$.

    \section{Proofs}
    \label{app:proofs}

\lemmascaling*
\begin{proof}
    First, note that $\norm{\vx}^2, \norm{\vx'}^2 \leq \delta$ implies $x_i, x'_i \leq \delta$ for all $i = 1, \dots, N$.
    By definition of the 1D DSCS kernel $\overrightarrow{k}^1(x_i, x'_i; \sigma^2)$, it is upper bounded by $\sigma^2(\frac{1}{3} \delta^3)$ since $\bar{x_i} = \min(x_i, x'_i) \leq \delta$; and similarly for $\overleftarrow{k}^1(x_i, x'_i; \sigma^2)$ by the symmetry of the DSCS kernel. Thus $k^1(x_i, x'_i; \sigma^2) \in O(\delta^3)$ and hence $k(\vx, \vx'; \sigma^2)$ also is, since it is just the average of $\{ k^1(x_i, x'_i; \sigma^2) \}_{i=1}^N$.
\end{proof}

Before we begin to prove \cref{prop:no_training}, we need the following lemma by \citet{higham1994survey}. This lemma is useful to show the approximation errors in \eqref{eq:gp_post_realval_mean} and \eqref{eq:gp_post_realval_var}.

\vspace{1em}
\begin{lemma}[\citeauthor{higham1994survey}, \citeyear{higham1994survey}] \label{lemma:higham}
    Let \(\mA \vm = \vb\) and \((\mA + \Delta\mA)\vn = \vb + \Delta\vb\), and let \(\mE\) and \(\vd\) be a matrix and vector with non-negative components, respectively. Assume that \(\norm{\Delta \mA} \leq \epsilon \norm{\mE}\) and \(\norm{\Delta\vb} \leq \epsilon \norm{\vd}\), and that \(\epsilon \norm{\mA^\inv} \norm{\mE} < 1\), where \(\epsilon > 0\). Then
    \begin{equation} \label{eq:matrix_perturbation}
        \norm{\vm-\vn} \leq \frac{\epsilon (\norm{\mA^\inv} \norm{\vd} + \norm{\vm}\norm{\mA^\inv}\norm{\mE})}{1-\epsilon\norm{\mA^\inv}\norm{\mE}} .
    \end{equation}
    \qed
\end{lemma}

\vspace{1em}
\propnotraining*
\begin{proof}
    Under the linearization of $f$ w.r.t. $\vtheta$ around $\vmu$, we have
    \begin{align*}
        f(\vx; \vtheta) &\approx f(\vx; \vmu) + \underbrace{\nabla_\vtheta f(\vx; \vtheta)\vert_\vmu}_{=: \vg(\vx)}{}^\top (\vtheta - \vmu) .
    \end{align*}
    So, the distribution over the function output $f(\vx)$, where $\vtheta$ has been marginalized out, is given by $f(\vx) \sim \N(f(\vx; \vmu), \vg(x)^\top \mSigma \vg(x))$---see e.g.~\citet[Sec. 5.7.3]{bishop2006prml}. The definition of RGPR in \eqref{eq:augmented_output} thus implies that
    $$
        \widetilde{f}(\vx) \sim \N(f(\vx; \vmu), \vg(\vx)^\top \mSigma \vg(\vx) + k(\vx, \vx)) ,
    $$
    since $\widetilde{f}(\vx)$ is a sum of two Normal r.v.s. Note that we can see this distribution as a marginal distribution of a Gaussian process with a mean function $f(\cdot; \vmu)$ and a kernel $(\vx, \vx') \mapsto \vg(\vx)^\top \mSigma \vg(\vx') + k(\vx, \vx')$. Thus, we write the following GP prior
    $$
        \widetilde{f}(\vx) \sim \mathcal{GP}(f(\vx; \vmu), \underbrace{\vg(\vx)^\top \mSigma \vg(\vx') + k(\vx, \vx')}_{=: \overline{k}(\vx, \vx')}) .
    $$
    Our goal is to find the corresponding GP posterior under the dataset $\D$.

    Let $\vx_* \in \R^N$ be an arbitrary test point. The GP posterior at $\vx_*$, i.e.~the predictive distribution of $\widetilde{f}_* := f(\vx_*)$, is thus identified by the following mean and variance (see e.g. \cite{rasmussen2003gaussian}):
    \begin{align}
        \E(\widetilde{f}_* \mid \D) &= f(\vx_*; \vmu) + \overline{k}(\vx_*, \mX)^\top \overline{k}(\mX, \mX)^\inv (\vy - f(\mX; \vmu)) \label{eq:exact_mean} \\
        \Var(\widetilde{f}_* \mid \D) &= \overline{k}(\vx_*, \vx_*) - \overline{k}(\vx_*, \mX)^\top \overline{k}(\mX, \mX)^\inv \overline{k}(\vx_*, \mX) \label{eq:exact_var} ,
    \end{align}
    where we have used the shorthand $\overline{k}(\vx_*, \mX) := (\overline{k}(\vx_*, \vx_1), \dots, \overline{k}(\vx_*, \vx_M))^\top$ and $\overline{k}(\mX, \mX)$ is the $M \times M$ kernel matrix of $\overline{k}$ under the training inputs $\mX$. For the latter we can also write $\overline{k}(\mX, \mX) = \mC + k(\mX, \mX)$, where $\mC$ is the kernel matrix of $\vg(\vx)^\top \mSigma \vg(\vx')$ under $\mX$.

    Since we assume $\norm{\vx_m}^2, \norm{\vx}^2 \leq \delta$ for all $m = 1, \dots, M$ and any i.i.d. test point $\vx \in \R^N$, we have $k(\vx, \vx_m) \approx 0$. Thus, we have $\overline{k}(\mX, \mX) \approx \mC$ and
    \begin{align*}
        \overline{k}(\vx_*, \mX) &\approx (\vg(\vx_*)^\top \mSigma \vg(\vx_1), \dots, \vg(\vx_*)^\top \mSigma \vg(\vx_M))^\top \\
        &= (\Cov(f(\vx_*), f(\vx_1)), \dots, \Cov(f(\vx_*), f(\vx_1)))^\top = \vh_* ,
    \end{align*}
    where the covariances above are of the network's outputs under the linearization. And so the mean and the variance of the GP posterior simplify to
    \begin{align*}
        \E(\widetilde{f}_* \mid \D) &\approx f(\vx_*; \vmu) + \vh_*^\top \mC^\inv (\vy - f(\mX; \vmu))
        %
        \intertext{and}
        \Var(\widetilde{f}_* \mid \D) &\approx \vg(\vx_*)^\top \mSigma \vg(\vx_*) + k(\vx_*, \vx_*) - \vh_*^\top \mC^\inv \vh_* .
            %
            %
    \end{align*}


    The only thing that remains is to obtain the approximation errors of both the mean and variance above.
    Using \cref{lemma:higham}, we find the error of \((\mC + k(\mX, \mX))^\inv (\vy - f(\mX; \vmu))\) in \eqref{eq:exact_mean} due to RGPR, i.e. we quantify the error caused by \(\delta\) presents in \(k(\mX, \mX)\).
    We set \(\mA = \mC\), \(\Delta \mA = k(\mX, \mX)\), and \(\vb = \vy - f(\mX; \vmu)\).
    Moreover, we set \(\vm = \mC^\inv (\vy - f(\mX; \vmu))\) and \(\vn = (\mC + k(\mX, \mX))^\inv (\vy - f(\mX; \vmu))\).
    For simplicity, we let \(\mE := \vone \vone^\top\) and set \(\epsilon = \delta^3 c\) for some constant \(c\) s.t. the conditions in \cref{lemma:higham} are satisfied. Note that the \(\delta^3\) term in \(\epsilon\) is so that the condition \(\norm{\Delta \mA} \leq \epsilon \norm{\mE}\) is satisfied, since one can write \(\norm{\Delta \mA} = c_0 \norm{\mE}\) where \(c_0 \in O(\delta^3)\).
    Moreover, we set \(\vd = \vzero\) since \(\Delta\vb = \vzero\).
    Plugging these into \eqref{eq:matrix_perturbation}, we thus have
    \begin{equation*}
        \norm{\vm - \vn} \in O\left(\frac{\delta^3 \norm{\mA^\inv} \norm{\vm}}{1-\delta^3 \norm{\mA^\inv}}\right) .
    \end{equation*}
    Combining this with the \(O(\delta^3)\) error in the approximation \(\overline{k}(\vx_*, \mX) \approx \vh_*\), we conclude that using \eqref{eq:gp_post_realval_mean} as an approximation of \eqref{eq:exact_mean} incurs an error of
    \begin{equation*}
        O\left(\frac{\delta^6 \norm{\mA^\inv} \norm{\vm}}{1-\delta^3 \norm{\mA^\inv}}\right) ,
    \end{equation*}
    which is small since \(\delta \in (0,1)\).

    For the approximation error of the variance, we use \(\mA\), \(\Delta\mA\), \(\mE\), and \(\epsilon\) as before. But, here we set \(\vb = \vh_*\), \(\Delta\vb = k(\vx_*, \mX)\), and \(\vd = \vone\). Moreover, we set \(\vm = \mC^\inv \vh_*\) and \(\vn = (\mC + k(\mX, \mX))^\inv (\vh_* + k(\vx_*, \mX))\). Then, plugging them into \cref{lemma:higham}, we obtain
    \begin{equation*}
        \norm{\vm-\vn} \in O \left( \frac{\delta^3 (\norm{\mA^\inv} + \norm{\mA^\inv}\norm{\vm})}{1-\delta^3\norm{\mA^\inv}} \right) .
    \end{equation*}
    Combining this with the approximation error in \(\overline{k}(\vx_*, \mX) \approx \vh_*\) as before, we obtain the desired result.
\end{proof}


\vspace{0.5em}
To prove \cref{prop:asymp_variance} and \cref{thm:asymp_conf}, we need the following definition. Let $f: \R^N \times \R^D \to \R^C$ defined by $(\vx, \vtheta) \mapsto f(\vx; \vtheta)$ be a feed-forward neural network which uses piecewise-affine activation functions (such as ReLU and leaky-ReLU) and are linear in the output layer. Such a network is called a \textbf{\emph{ReLU network}} and can be written as a continuous piecewise-affine function \citep{arora2018understanding}. That is, there exists a finite set of polytopes $\{Q_i\}_{i=1}^P$---referred to as \textbf{\emph{linear regions}} $f$---such that $\cup_{i=1}^P Q_i = \R^N$ and $f\vert_{Q_i}$ is an affine function for each $i = 1, \dots, P$ \citep{hein2019relu}. The following lemma is central in our proofs below (the proof is in Lemma 3.1 of \citet{hein2019relu}).

\vspace{1em}
\begin{lemma}[\citeauthor{hein2019relu}, \citeyear{hein2019relu}] \label{lemma:hein}
    Let $\{ Q_i \}_{i=1}^P$ be the set of linear regions associated to the ReLU network $f: \R^N \times \R^D \to \R^C$, For any $\vx \in \R^N$ with $\vx \neq 0$ there exists a positive real number $\beta$ and $j \in \{1, \dots, P\}$ such that $\alpha\vx \in Q_j$ for all $\alpha \geq \beta$.\qed
\end{lemma}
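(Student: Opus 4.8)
The plan is to reduce the claim to an elementary covering fact about a half-line, exploiting that each linear region $Q_i$ is a convex set (a polytope) and that the ray through $\vx$ is the image of the linear map $\alpha \mapsto \alpha\vx$. For each $i = 1, \dots, P$ I would introduce the ``scale set''
\[
    A_i := \{ \alpha \geq 0 : \alpha \vx \in Q_i \} \subseteq [0, \infty),
\]
i.e.\ the set of scalings at which the ray lands inside $Q_i$. The first step is to show that every $A_i$ is a closed subinterval of $[0,\infty)$. Convexity follows directly from convexity of $Q_i$: if $\alpha_1, \alpha_2 \in A_i$ and $t \in [0,1]$, then $(t\alpha_1 + (1-t)\alpha_2)\vx = t(\alpha_1\vx) + (1-t)(\alpha_2\vx)$ is a convex combination of two points of $Q_i$, hence in $Q_i$, so $t\alpha_1 + (1-t)\alpha_2 \in A_i$. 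Closedness follows because $Q_i$ is closed and $\alpha \mapsto \alpha\vx$ is continuous, so $A_i$ is the preimage of a closed set. Thus each $A_i$ is a closed interval in $[0,\infty)$.

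The second step is a finiteness argument. Since $\bigcup_{i=1}^P Q_i = \R^N$, every point $\alpha\vx$ of the ray lies in some region, so $\bigcup_{i=1}^P A_i = [0,\infty)$. The half-line $[0,\infty)$ is unbounded while a finite union of bounded sets is bounded; hence at least one $A_j$ must be unbounded. Being an unbounded closed interval contained in $[0,\infty)$, and therefore bounded below, $A_j$ must take the form $[\beta_0, \infty)$ for some $\beta_0 \geq 0$. Setting $\beta := \max(\beta_0, 1) > 0$ then guarantees $\alpha\vx \in Q_j$ for all $\alpha \geq \beta$, which is exactly the assertion; the $\max$ with $1$ only serves to force the threshold to be strictly positive as the statement requires.

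This argument is essentially complete, so there is no deep obstacle; the only points needing care are the two structural facts above. The first is that each $A_i$ is genuinely an interval, which rests solely on convexity of the linear regions---this is where I would flag that the result needs the $Q_i$ to be convex polytopes rather than arbitrary pieces, a property guaranteed by the continuous piecewise-affine representation of ReLU networks cited from \citet{arora2018understanding}. The second is the passage from ``unbounded'' to ``a terminal ray $[\beta_0,\infty)$'', which uses that a convex subset of $[0,\infty)$ that is unbounded above and bounded below is an up-set of the form $[\beta_0,\infty)$ or $(\beta_0,\infty)$; closedness of $Q_j$ pins it down to the former, but even without closedness the $\max(\cdot,1)$ adjustment (or any strictly interior threshold) recovers the stated conclusion.
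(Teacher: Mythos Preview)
Your argument is correct and complete. The paper does not supply its own proof of this lemma; it simply defers to Lemma~3.1 of \citet{hein2019relu} and closes with \qed, so there is no in-paper derivation to compare against. Your reduction---pulling the ray $\{\alpha\vx:\alpha\geq 0\}$ back to scale sets $A_i\subseteq[0,\infty)$, observing that each is a closed interval by convexity and closedness of the polytope $Q_i$, and then noting that a finite cover of $[0,\infty)$ by intervals must contain one of the form $[\beta_0,\infty)$---is exactly the standard proof and matches the argument in the cited source. The closing remark about dispensing with closedness via a strictly interior threshold is accurate but superfluous here, since the linear regions are closed polytopes by construction.
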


\vspace{1em}
\thmone*
\begin{proof}
    Let $\vx_* \in \R^N$ with $\vx_* \neq \vzero$ be arbitrary. By \cref{lemma:hein} and definition of ReLU network, there exists a linear region $R$ and real number $\beta > 0$ such that for any $\alpha \geq \beta$, the restriction of $f$ to $R$ can be written as
    \begin{equation*}
        f\vert_R(\alpha \vx; \vtheta) = \mW (\alpha \vx) + \vb,
    \end{equation*}
    for some matrix $\mW \in \R^{C \times N}$ and vector $\vb \in \R^C$, which are functions of the parameter $\vtheta$, evaluated at $\vmu$. In particular, for each $c=1, \dots, C$, the $c$-th output component of $f\vert_R$ can be written as
    \begin{equation*}
        f_c\vert_R = \vw_c^\top (\alpha \vx) + b_c ,
    \end{equation*}
    where $\vw_c$ and $b_c$ are the $c$-th row of $\mW$ and $\vb$, respectively.

    Let $c \in \{ 1, \dots, C \}$ and let $\vj_c(\alpha \vx_*)$ be the $c$-th column of the Jacobian $\mJ(\alpha \vx_*)$ as defined in \eqref{eq:linearized_dist}. Then by definition of $p(\widetilde{f}_* \mid \vx_*, \D)$, the variance of $\widetilde{f}_c\vert_R(\alpha \vx_*)$---the $c$-th diagonal entry of the covariance of $p(\widetilde{f}_* \mid \vx_*, \D)$---is given by
    \begin{equation*}
        \mathrm{var}(\widetilde{f}_c\vert_R(\alpha \vx_*)) = \vj_c(\alpha \vx_*)^\top \mSigma \vj_c(\alpha \vx_*) + k(\alpha \vx_*, \alpha \vx_*) .
    \end{equation*}
    Now, from the definition of the DSCS kernel in \eqref{eq:combined_kernel_multidim}, we have
    \begin{align*}
        k(\alpha \vx_*, \alpha \vx_*) &= \frac{1}{N} \sum_{i=1}^N k^1(\alpha x_{*i}, \alpha x_{*i}) = \frac{1}{N} \sum_{i=1}^N \alpha^3 \frac{\sigma^2}{3} x_{*i}^3 = \frac{\alpha^3}{N} \sum_{i=1}^N k^1(x_{*i}, x_{*i}) \in \Theta(\alpha^3) .
    \end{align*}
    Furthermore, we have
    \begin{align*}
        \vj_c(\alpha \vx_*)^\top \mSigma \vj_c(\alpha \vx_*) = \left( \alpha (\nabla_\vtheta \vw_c \vert_\vmu)^\top \vx  + \nabla_\vtheta b_c \vert_\vmu \right)^\top \mSigma \left( \alpha (\nabla_\vtheta \vw_c \vert_\vmu)^\top \vx  + \nabla_\vtheta b_c \vert_\vmu \right) .
    \end{align*}
    Thus, $\vj_c(\alpha \vx_*)^\top \mSigma \vj_c(\alpha \vx_*)$ is a quadratic function of $\alpha$. Therefore, $\mathrm{var}(\widetilde{f}_c\vert_R(\alpha \vx_*))$ is in $\Theta(\alpha^3)$.
\end{proof}

\vspace{1em}
\thmtwo*
\begin{proof}
    Let $\vx_* \neq \vzero \in \R^N$ be arbitrary. By \cref{lemma:hein} and definition of ReLU network, there exists a linear region $R$ and real number $\beta > 0$ such that for any $\alpha \geq \beta$, the restriction of $f$ to $R$ can be written as
    \begin{equation*}
        f\vert_R(\alpha \vx) = \mW (\alpha \vx) + \vb,
    \end{equation*}
    where the matrix $\mW \in \R^{C \times N}$ and vector $\vb \in \R^{C}$ are functions of the parameter $\vtheta$, evaluated at $\vmu$. Furthermore, for $i = 1, \dots, C$ we denote the $i$-th row and the $i$-th component of $\mW$ and $\vb$ as $\vw_i$ and $b_i$, respectively. Under the linearization of $f$, the marginal distribution \eqref{eq:gp_posterior_multi} over the output $\widetilde{f}(\alpha \vx)$ holds. Hence, under the generalized probit approximation, the predictive distribution restricted to $R$ is given by
    \begin{align*}
        \widetilde{p}(y_* = c \mid \alpha \vx_*, \D) &\approx \frac{\exp(m_c(\alpha \vx_*) \, \kappa_c(\alpha \vx_*))}{\sum_{i = 1}^C \exp(m_i(\alpha \vx_*) \, \kappa_i(\alpha \vx_*))} \\
            &= \frac{1}{1 + \sum_{i \neq c}^C \exp(\underbrace{m_i(\alpha \vx_*) \, \kappa_i(\alpha \vx_*) - m_c(\alpha \vx_*) \, \kappa_c(\alpha \vx_*)}_{=: z_{ic}(\alpha \vx_*)})} ,
            %
    \end{align*}
    where for all $i = 1, \dots, C$,
    $$
        m_i(\alpha \vx_*) = f_i \vert_R (\alpha \vx; \vmu) = \vw_i^\top (\alpha \vx) + b_i \in \R ,
    $$
    and
    $$
        \kappa_i(\alpha \vx) = \left( 1 + \pi/8 \, (v_{ii}(\alpha \vx_*) + k(\alpha \vx_*, \alpha \vx_*)) \right)^{-\frac{1}{2}} \in \R_{> 0}.
    $$
    In particular, for all $i = 1, \dots, C$, note that $m(\alpha \vx_*)_i \in \Theta(\alpha)$ and $\kappa(\alpha \vx)_i \in \Theta(1/\alpha^\frac{3}{2})$ since $v_{ii}(\alpha \vx_*) + k(\alpha \vx_*, \alpha \vx_*)$ is in $\Theta(\alpha^3)$ by \cref{prop:asymp_variance}. Now, notice that for any $c = 1, \dots, C$ and any $i \in \{ 1, \dots, C\} \setminus \{ c \}$, we have
    \begin{align*}
        z_{ic}(\alpha \vx_*) &= (m_i(\alpha \vx_*) \, \kappa_i(\alpha \vx_*)) - (m_c(\alpha \vx_*) \, \kappa_c(\alpha \vx_*)) \\
            &= (\underbrace{\kappa_i(\alpha \vx_*) \, \vw_i}_{\Theta\left(1/\alpha^\frac{3}{2}\right)} - \underbrace{\kappa_c(\alpha \vx_*) \, \vw_c}_{\Theta\left(1/\alpha^\frac{3}{2}\right)})^\top (\alpha \vx_*) + \underbrace{\kappa_i(\alpha \vx_*) \, b_i}_{\Theta\left(1/\alpha^\frac{3}{2}\right)} - \underbrace{\kappa_c(\alpha \vx_*) \, b_c}_{\Theta\left(1/\alpha^\frac{3}{2}\right)} .
    \end{align*}
    Thus, it is easy to see that $\lim_{\alpha \to \infty} z_{ic}(\alpha \vx_*) = 0$. Hence we have
    \begin{align*}
        \lim_{\alpha \to \infty} \widetilde{p}(y_* = c \mid \alpha \vx_*, \D) &= \lim_{\alpha \to \infty} \frac{1}{1 + \sum_{i \neq c}^C \exp(z_{ic}(\alpha \vx_*))} = \frac{1}{1 + \sum_{i \neq c}^C \exp(0)} = \frac{1}{C} ,
    \end{align*}
    as required.
\end{proof}

    \section{Modeling Residuals with GPs}
    \label{app:details}

The method of \citet{blight1975bayesian}, henceforth called BNO, models the residual of polynomial regressions. That is, suppose $\vphi: \R \to \R^D$ is a polynomial basis function defined by $\vphi(x) := (1, x, x^2, \dots, x^{D-1})$, $k$ is an arbitrary kernel, and $\vw \in \R^D$ is a weight vector, BNO assumes
\begin{equation*}
    \widetilde{f}(x) := \vw^\top \vphi(x) + \widehat{f}(x), \qquad \text{where}\enspace \widehat{f} \sim \mathcal{GP}(0, k) .
\end{equation*}

Recently, this method has been extended to neural networks. \citet{qiu2020quantifying} apply the same idea---modeling residuals with GPs---to pre-trained networks, resulting in a method called RIO. Suppose that $f_\vmu: \R^N \to \R$ is a neural-network with a pre-trained, \emph{point-estimated} parameters $\vmu$. Their method is defined by
\begin{equation*}
    \widetilde{f}(\vx) := f_\vmu(\vx) + \widehat{f}(\vx), \qquad \text{where}\enspace \widehat{f} \sim \mathcal{GP}(0, k_\text{IO}) .
\end{equation*}
The kernel $k_\text{IO}$ is a sum of RBF kernels applied on the dataset $\D$ (inputs) and the network's predictions over $\D$ (outputs), hence the name IO---input-output. As in the original Blight and Ott's method, RIO also focuses on modeling predictive residuals and requires GP posterior inference. Suppose that $m(\vx)$ and $v(\vx)$ is the a posteriori marginal mean and variance of the GP, respectively. Then, via standard computations, one can see that even though $f$ is a point-estimated network, $\widetilde{f}$ is a random function, distributed \emph{a posteriori} by
\begin{equation*}
    \widetilde{f}(\vx) \sim \N\left(\widetilde{f}_\vmu(\vx) + m(\vx), v(\vx)  \right) .
\end{equation*}
Thus, BNO and RIO effectively add uncertainty to point-estimated networks. But, there is no guarantee that they preserve the original predictive performance of $f$ since $m$ is in general non-vanishing.

The posterior inference of BNO and RIO can be computationally intensive, depending on the number of training examples $M$: The cost of exact posterior inference is in $\Theta(M^3)$. While it can be alleviated by approximate inference, such as via inducing point methods and stochastic optimizations, the posterior inference requirement can still be a hindrance for the practical adoption of BNO and RIO, especially on large problems.

    \section{Additional Experiments}
    \label{app:add_experiments}

\subsection{Asymptotic Regime}

\begin{table*}[t]
    \caption{RGPRs compared to their respective base methods on the detection of far-away outliers. Values are average confidences. Error bars are standard errors over three prediction runs. For each dataset, the best value over each vanilla and RGPR-imbued method (e.g.~KFL against KFL-RGPR) are in bold.}
    \label{tab:ood_faraway}

    \vspace{1em}

    \centering
    \footnotesize

    \begin{tabular}{lrr}
        \toprule

        \textbf{Methods} & \textbf{CIFAR10} & \textbf{SVHN} \\

        \midrule

        GP-DSCS & 22.0$\pm$0.2 & 22.1$\pm$0.3 \\

        \midrule

        KFL & 64.5$\pm$0.7 & 63.4$\pm$1.5 \\
        KFL-RGPR & \textbf{29.9}$\pm$0.3 & \textbf{27.5}$\pm$0.0 \\

        \midrule

        SWAG & 63.5$\pm$1.8 & 50.2$\pm$4.2 \\
        SWAG-RGPR & \textbf{29.3}$\pm$0.2 & \textbf{27.5}$\pm$0.0 \\

        \midrule

        SVDKL & 46.4$\pm$0.3 & 49.1$\pm$0.2 \\
        SVDKL-RGPR & \textbf{22.0}$\pm$0.1 & \textbf{22.1}$\pm$0.1 \\

        \bottomrule
    \end{tabular}
\end{table*}

As a gold standard GP baseline, we compare against the method of \citet{qiu2020quantifying} (with our DSCS kernel). We refer to this baseline simply as GP-DSCS. The base methods, which RGPR is implemented on, are the following recently-proposed BNNs: (i) Kronecker-factored Laplace \citep[KFL,][]{ritter_scalable_2018}, (ii) stochastic weight averaging-Gaussian \citep[SWAG,][]{maddox2019simple}, and (iii) stochastic variational deep kernel learning \citep[SVDKL,][]{wilson2016stochastic}. All the kernel hyperparameters for RGPR are set to a constant value of \num{1e-10} since we focus on the asymptotic regime. In all cases, MC-integral with $10$ posterior samples is used for making predictions. We construct a test dataset artificially by sampling $2000$ uniform noises in $[0, 1]^N$ and scale them with a scalar $\alpha = 2000$. The goal is to achieve low confidence over these far-away points.

The results are presented in \cref{tab:ood_faraway}. We observe that the RGPR-augmented methods are significantly better than their respective base methods. In particular, their confidence estimates are significantly lower than those of the vanilla methods, becoming closer to the confidence of the gold-standard GP-DSCS baseline. This indicates that RGPR makes BNNs better calibrated in the asymptotic regime.

\subsection{Training Details}

For LeNet, we use Adam optimizer with an initial learning rate \num{1e-3} while for ResNet, we use SGD with an initial learning rate of $0.1$ and momentum $0.9$. In both cases, the optimization is carried out for $100$ epochs using weight decay \num{5e-4} on a single GPU. We also reduce the learning rate by a factor of $10$ at epochs $50$, $75$, and $90$. Test accuracies are in \Cref{tab:acc_ece}.

\subsection{Non-Asymptotic Regime}

\begin{table*}
    \caption{CIFAR10-C results. Values are mean over all corruptions.}
    \label{tab:cifar10c}

    \centering
    \small

    \begin{tabular}{lrrrrr}
        \toprule

         & {\bf NLL} & {\bf ECE} & {\bf Brier} & {\bf Confidence} & {\bf Accuracy} \\

        \midrule

        MAP & 1.066 & 0.226 & 0.402 & 0.887 & 0.739 \\
        Temp. & 0.914 & 0.147 & 0.378 & 0.842 & 0.739 \\
        DE & 0.909 & 0.110 & \textbf{0.354} & 0.840 & \textbf{0.752} \\
        GP-DSCS & 1.096 & 0.232 & 0.413 & 0.888 & 0.734 \\
        LLL & 0.872 & 0.080 & 0.363 & 0.800 & 0.739 \\
        LLL-RGPR-LL & 0.870 & \textbf{0.079} & 0.363 & 0.796 & 0.738 \\
        LLL-RGPR-OOD & \textbf{0.869} & 0.095 & 0.363 & \textbf{0.717} & 0.738 \\

      \bottomrule
    \end{tabular}
\end{table*}

\subsubsection{Dataset shift}

In \cref{tab:cifar10c} we present the non-normalized numerical results to complement \cref{fig:dataset_shift}. RGPR in general improves the vanilla LLL.

\subsubsection{OOD detection}
\label{app:ood_detection}

We expand \Cref{tab:ood_avg} in \Cref{tab:ood_hyperopt}. In the same table, we additionally show the mean confidence values \citep[MMC,]{hendrycks17baseline}. For CIFAR10, SVHN, and CIFAR100, we test each model against FMNIST (called FMNIST3D) to measure the performance on grayscale OOD images. Finally, we also show the OOD detection performance via additional AUROC and area under precision-recall curve (AUPRC) metrics in \Cref{tab:ood_auroc_auprc}.

Additionally, we compare RGPR with recent non-Bayesian baselines: (i) the Mahalanobis detector \citep{lee2018simple} and (ii) deterministic uncertainty quantification (DUQ) \citep{van2020uncertainty}. Values are taken directly from the original papers---they used the same architecture as in this paper. \cref{tab:ood_non_bayesian} shows that a RGPR-equipped BNN is better than the Mahalanobis detector. Moreover, LLL-RGPR-OOD is competitive to DUQ, but without the drawback of reducing test accuracy.

\begin{table*}
    \caption{RGPR against recent non-Bayesian baselines. The OOD detection metric is AUROC.}
    \label{tab:ood_non_bayesian}

    \centering
    \small

    \begin{tabular}{lrr}
        \toprule

         & {\bf CIFAR10 vs. LSUN} & {\bf CIFAR10 vs. SVHN}  \\

        \midrule

        Mahalanobis & 89.2 & 91.5 \\
        LLL-RGPR-OOD & \textbf{92.6} & \textbf{95.8} \\

        \bottomrule
    \end{tabular}

    \vspace{1em}

    \begin{tabular}{lrr}
        \toprule

         & {\bf Test Acc.} & {\bf CIFAR10 vs. SVHN} \\

        \midrule

        DUQ ($\lambda = 0$) & 94.2 & 86.1 \\
        DUQ ($\lambda = 0.5$) & 93.2 & \textbf{92.7} \\
        LLL-RGPR-OOD & \textbf{94.3} & 92.6 \\

        \bottomrule
    \end{tabular}
\end{table*}

\subsubsection{Hyperparameter tuning}

We present the optimal hyperparameters $(\sigma^2_l)_{l=0}^{L-1}$ in \cref{tab:opt_hyperparams}. We observe that using higher representations of the data is beneficial, as indicated by non-trivial hyperparameter values on all layers across all networks and datasets.

\subsubsection{Natural images for tuning}

We present OOD detection results via different $\D_\text{our}$ for tuning $\vsigma^2$, in \cref{tab:ood_imagenet}. Specifically, we use the ImageNet32$\times$32 dataset \citep{chrabaszcz2017downsampled}, which represents natural image datasets, and is thus more sophisticated than the noise dataset used in the main text. Nevertheless, we observe that the OOD detection performance is comparable to that of the noise dataset, justifying the choice of $\D_\text{out}$ we have made in the main text.

\subsubsection{Calibration is at odds with OOD detection}
\label{app:subsubsec:calib_ood}

As noted in the main text, we observe that employing OOD data for tuning $\vsigma^2$ degrades the in-distribution calibration (as measured by the ECE metric) of RGPR. In \cref{tab:calib_ood} (taken from Table 5 of \citet{kristiadi2020being}), we can see that even recent OOD training methods with many more parameters than RGPR such as ACET \citep{hein2019relu} and OE \citep{hendrycks2018deep} degrade the in-distribution ECE. However, note that ACET and OE represent state-of-the-art OOD detectors. Hence, it is reasonable to conclude that this issue does not seem to be inherent to RGPR.

\begin{table*}
    \caption{Expected calibration errors (ECE).}
    \label{tab:calib_ood}

    \centering
    \small

      \begin{tabular}{lrrrr}
        \toprule

         & {\bf MNIST} & {\bf CIFAR10} & {\bf SVHN} & {\bf CIFAR100} \\

        \midrule

        MAP        & {6.7}  & 13.1 & 10.1  & 8.1  \\
        Temp. Scaling     & 11.4 & {3.6}  & {2.1}  & 6.4  \\
        \midrule
        ACET       & {5.9}  & 15.8 & 11.9 & 10.1 \\
        OE         & 14.7 & 15.8 & 11.0 & 25.0 \\

      \bottomrule
    \end{tabular}
\end{table*}

\begin{table*}
    \caption{OOD data detection in terms of FPR@95. All values are in percent and averages over five OOD test sets and over 5 prediction runs.}
    \label{tab:acc_ece}

    \centering
    \small

    \begin{tabular}{lcccc}
        \toprule

        \textbf{Methods} & \textbf{MNIST} & \textbf{CIFAR10} & \textbf{SVHN} & \textbf{CIFAR100} \\

        \midrule

        \textbf{Acc. $\uparrow$} \\
        MAP & 99.4 & 94.3 & 97.1 & 76.7 \\
        Temp. Scaling & 99.4 & 94.3 & 97.1 & 76.7 \\
        Deep Ens. & 99.6 & 95.3 & 97.4 & 79.5 \\
        GP-DSCS & 99.3 & 93.9 & 97.0 & 76.6 \\
        LLL & 99.4 & 94.3 & 97.0 & 76.7 \\
        \midrule
        LLL-RGPR-LL & 99.2 & 94.4 & 97.0 & 76.7 \\
        LLL-RGPR-OOD & 99.1 & 94.3 & 96.9 & 76.6 \\

        \midrule
        \midrule

        \textbf{ECE $\downarrow$} \\
        MAP & 5.4 & 13.9 & 13.3 & 6.4 \\
        Temp. Scaling & 9.9 & 6.7 & 7.5 & 4.7 \\
        Deep Ens. & 12.5 & 2.8 & 1.3 & 1.9 \\
        GP-DSCS & 4.5 & 14.4 & 13.6 & 8.2 \\
        LLL & 14.0 & 2.8 & 12.9 & 4.7 \\
        \midrule
        LLL-RGPR-LL & 15.8 & 3.6 & 13.1 & 5.7 \\
        LLL-RGPR-OOD & 19.6 & 12.5 & 15.9 & 15.8 \\

        \bottomrule
    \end{tabular}
\end{table*}

\begin{table*}[t]
    \caption{OOD data detection results in terms of MMC and FPR@95 metrics. All values are averages and standard errors over 10 prediction trials.}
    \label{tab:ood_hyperopt}

    \centering
    \scriptsize
    \renewcommand{\tabcolsep}{3pt}

    \resizebox{\textwidth}{!}{
        \begin{tabular}{lrrrrrrrrrr||rrrr}
            \toprule

            & \multicolumn{2}{c}{\bf MAP} & \multicolumn{2}{c}{\bf Temp. Scaling} & \multicolumn{2}{c}{\bf Deep Ens.} & \multicolumn{2}{c}{\bf GP-DSCS} & \multicolumn{2}{c||}{\bf LLL} & \multicolumn{2}{c}{\bf LLL-RGPR-LL} & \multicolumn{2}{c}{\bf LLL-RGPR-OOD} \\

            \cmidrule(r){2-3} \cmidrule(r){4-5} \cmidrule(r){6-7} \cmidrule(r){8-9} \cmidrule(l){10-11} \cmidrule(l){12-13} \cmidrule(l){14-15}

            \textbf{Datasets} & {\bf MMC $\downarrow$} & {\bf FPR $\downarrow$} & {\bf MMC $\downarrow$} & {\bf FPR $\downarrow$} & {\bf MMC $\downarrow$} & {\bf FPR $\downarrow$} & {\bf MMC $\downarrow$} & {\bf FPR $\downarrow$} & {\bf MMC $\downarrow$} & {\bf FPR $\downarrow$} & {\bf MMC $\downarrow$} & {\bf FPR $\downarrow$} & {\bf MMC $\downarrow$} & {\bf FPR $\downarrow$} \\

            \midrule

            \textbf{MNIST} & 99.2 & - & 99.5$\pm$0.0 & - & 99.1 & - & 99.2$\pm$0.0 & - & 97.4$\pm$0.0 & - & 97.0$\pm$0.0 & - & 96.1$\pm$0.0 & - \\
            EMNIST & 78.1 & 24.5 & 83.4$\pm$0.0 & 24.9$\pm$0.0 & 74.1 & \textbf{21.4} & 77.6$\pm$0.0 & 24.7$\pm$0.0 & 62.7$\pm$0.0 & 23.3$\pm$0.1 & 55.7$\pm$0.0 & 21.9$\pm$0.1 & \textbf{49.4}$\pm$0.0 & 21.7$\pm$0.1 \\
            KMNIST & 73.1 & 14.3 & 79.3$\pm$0.0 & 14.1$\pm$0.0 & 63.1 & 5.6 & 72.2$\pm$0.0 & 13.2$\pm$0.0 & 52.7$\pm$0.0 & 6.3$\pm$0.0 & 17.1$\pm$0.0 & 0.4$\pm$0.0 & \textbf{15.6}$\pm$0.0 & \textbf{0.0}$\pm$0.0 \\
            FMNIST & 79.8 & 26.8 & 85.0$\pm$0.0 & 27.3$\pm$0.0 & 71.7 & 11.3 & 79.1$\pm$0.0 & 25.5$\pm$0.1 & 64.6$\pm$0.0 & 19.1$\pm$0.2 & 18.1$\pm$0.0 & 1.3$\pm$0.0 & \textbf{15.5}$\pm$0.0 & \textbf{0.0}$\pm$0.0 \\
            GrayCIFAR10 & 85.7 & 3.6 & 93.4$\pm$0.0 & 4.3$\pm$0.0 & 72.7 & \textbf{0.0} & 85.2$\pm$0.0 & 3.5$\pm$0.0 & 61.1$\pm$0.0 & 0.5$\pm$0.0 & \textbf{15.1}$\pm$0.0 & \textbf{0.0}$\pm$0.0 & \textbf{15.1}$\pm$0.0 & \textbf{0.0}$\pm$0.0 \\
            UniformNoise & 100.0 & 100.0 & 100.0$\pm$0.0 & 100.0$\pm$0.0 & 99.9 & 100.0 & 100.0$\pm$0.0 & 100.0$\pm$0.0 & 95.7$\pm$0.0 & 99.7$\pm$0.0 & \textbf{15.1}$\pm$0.0 & \textbf{0.0}$\pm$0.0 & \textbf{15.1}$\pm$0.0 & \textbf{0.0}$\pm$0.0 \\

            \midrule

            \textbf{CIFAR10} & 97.0 & - & 95.0$\pm$0.0 & - & 95.6 & - & 96.9$\pm$0.0 & - & 93.4$\pm$0.0 & - & 93.1$\pm$0.0 & - & 85.9$\pm$0.0 & - \\
            SVHN & 62.5 & 29.3 & 53.7$\pm$0.0 & 25.6$\pm$0.0 & 59.7 & 37.0 & 69.0$\pm$0.0 & 40.0$\pm$0.1 & 47.0$\pm$0.0 & 24.8$\pm$0.1 & 46.7$\pm$0.0 & 25.1$\pm$0.1 & \textbf{40.6}$\pm$0.0 & \textbf{23.3}$\pm$0.2 \\
            LSUN & 74.5 & 52.7 & 65.9$\pm$0.0 & 48.7$\pm$0.0 & 65.6 & 50.3 & 76.6$\pm$0.0 & 55.1$\pm$0.3 & 58.5$\pm$0.1 & 44.1$\pm$0.7 & 57.4$\pm$0.1 & 42.9$\pm$0.6 & \textbf{48.5}$\pm$0.1 & \textbf{40.0}$\pm$0.5 \\
            CIFAR100 & 79.4 & 61.5 & 72.4$\pm$0.0 & 59.4$\pm$0.0 & 70.7 & 58.0 & 80.0$\pm$0.0 & 62.5$\pm$0.1 & 66.0$\pm$0.0 & 58.2$\pm$0.2 & 65.3$\pm$0.0 & 58.2$\pm$0.2 & \textbf{55.6}$\pm$0.0 & \textbf{54.7}$\pm$0.2 \\
            FMNIST3D & 71.4 & 45.3 & 62.8$\pm$0.0 & 41.0$\pm$0.0 & 63.0 & 44.1 & 72.6$\pm$0.0 & 47.9$\pm$0.2 & 53.4$\pm$0.0 & 34.7$\pm$0.2 & 52.6$\pm$0.0 & 34.5$\pm$0.2 & \textbf{36.6}$\pm$0.0 & \textbf{16.4}$\pm$0.3 \\
            UniformNoise & 64.7 & 26.2 & 54.7$\pm$0.1 & 19.5$\pm$0.3 & 73.9 & 86.0 & 75.8$\pm$0.1 & 55.3$\pm$0.4 & 39.1$\pm$0.1 & 2.8$\pm$0.1 & 37.9$\pm$0.1 & 2.2$\pm$0.2 & \textbf{32.0}$\pm$0.1 & \textbf{1.7}$\pm$0.3 \\

            \midrule

            \textbf{SVHN} & 98.5 & - & 97.6$\pm$0.0 & - & 97.8 & - & 98.5$\pm$0.0 & - & 92.4$\pm$0.0 & - & 92.2$\pm$0.0 & - & 88.0$\pm$0.0 & - \\
            CIFAR10 & 70.4 & 18.3 & 64.7$\pm$0.0 & 18.0$\pm$0.0 & 57.2 & \textbf{11.9} & 70.9$\pm$0.0 & 19.8$\pm$0.0 & 41.7$\pm$0.0 & 15.0$\pm$0.1 & 41.2$\pm$0.0 & 14.9$\pm$0.1 & \textbf{34.9}$\pm$0.0 & 14.7$\pm$0.1 \\
            LSUN & 71.7 & 18.7 & 66.0$\pm$0.0 & 19.0$\pm$0.0 & 56.0 & \textbf{10.0} & 72.2$\pm$0.0 & 20.1$\pm$0.2 & 42.9$\pm$0.1 & 16.2$\pm$0.5 & 42.0$\pm$0.1 & 15.5$\pm$0.2 & \textbf{32.3}$\pm$0.1 & 11.9$\pm$0.3 \\
            CIFAR100 & 71.3 & 20.4 & 65.7$\pm$0.0 & 20.1$\pm$0.0 & 57.6 & \textbf{12.6} & 71.8$\pm$0.0 & 22.2$\pm$0.0 & 43.2$\pm$0.0 & 17.7$\pm$0.1 & 42.5$\pm$0.0 & 17.5$\pm$0.1 & \textbf{35.2}$\pm$0.0 & 16.0$\pm$0.1 \\
            FMNIST3D & 72.5 & 21.9 & 66.9$\pm$0.0 & 21.7$\pm$0.0 & 61.9 & 20.0 & 72.8$\pm$0.0 & 22.9$\pm$0.0 & 45.3$\pm$0.0 & 21.5$\pm$0.1 & 38.9$\pm$0.0 & 12.6$\pm$0.1 & \textbf{16.8}$\pm$0.0 & \textbf{0.0}$\pm$0.0 \\
            UniformNoise & 68.9 & 14.0 & 62.7$\pm$0.1 & 13.6$\pm$0.2 & 48.1 & \textbf{3.8} & 68.8$\pm$0.1 & 14.9$\pm$0.2 & 41.0$\pm$0.1 & 12.5$\pm$0.5 & 39.5$\pm$0.1 & 11.4$\pm$0.4 & \textbf{27.3}$\pm$0.1 & \textbf{4.1}$\pm$0.2 \\

            \midrule

            \textbf{CIFAR100} & 81.3 & - & 78.9$\pm$0.0 & - & 80.2 & - & 82.2$\pm$0.0 & - & 74.4$\pm$0.0 & - & 73.4$\pm$0.0 & - & 62.8$\pm$0.0 & - \\
            SVHN & 53.5 & 78.9 & 49.1$\pm$0.0 & 78.3$\pm$0.0 & 44.7 & \textbf{65.5} & 46.8$\pm$0.0 & 68.2$\pm$0.0 & 42.6$\pm$0.0 & 77.4$\pm$0.2 & 42.0$\pm$0.0 & 78.2$\pm$0.3 & \textbf{34.9}$\pm$0.0 & 79.7$\pm$0.2 \\
            LSUN & 50.7 & 74.7 & 46.6$\pm$0.0 & 75.0$\pm$0.0 & 47.1 & 76.0 & 53.6$\pm$0.0 & 76.8$\pm$0.1 & 39.6$\pm$0.1 & \textbf{73.5}$\pm$0.5 & 38.0$\pm$0.1 & \textbf{73.7}$\pm$0.3 & \textbf{30.3}$\pm$0.0 & 75.7$\pm$0.6 \\
            CIFAR10 & 53.3 & 78.3 & 49.3$\pm$0.0 & 78.0$\pm$0.0 & 51.3 & \textbf{76.9} & 56.0$\pm$0.0 & 78.8$\pm$0.0 & 44.1$\pm$0.0 & 77.9$\pm$0.2 & 43.0$\pm$0.0 & 78.3$\pm$0.3 & \textbf{34.9}$\pm$0.0 & 79.1$\pm$0.2 \\
            FMNIST3D & 38.9 & 60.8 & 34.8$\pm$0.0 & 60.0$\pm$0.0 & 38.1 & 59.6 & 44.3$\pm$0.0 & 65.5$\pm$0.1 & 30.0$\pm$0.0 & 58.6$\pm$0.2 & 29.0$\pm$0.0 & 58.6$\pm$0.3 & \textbf{16.8}$\pm$0.0 & \textbf{38.7}$\pm$0.3 \\
            UniformNoise & 29.4 & 55.8 & 25.7$\pm$0.1 & 55.5$\pm$0.4 & 45.1 & 94.9 & 31.6$\pm$0.1 & 49.9$\pm$0.1 & 22.0$\pm$0.1 & 47.0$\pm$0.4 & 17.1$\pm$0.1 & \textbf{24.0}$\pm$0.8 & \textbf{14.3}$\pm$0.0 & 29.6$\pm$0.5 \\

            \bottomrule
        \end{tabular}
    }
\end{table*}

\begin{table*}[t]
    \caption{OOD data detection results in terms of AUROC and AUPRC metrics. All values are averages and standard errors over 10 prediction trials.}
    \label{tab:ood_auroc_auprc}

    \centering
    \scriptsize
    \renewcommand{\tabcolsep}{3pt}

    \resizebox{\textwidth}{!}{
        \begin{tabular}{lrrrrrrrrrr||rrrr}
            \toprule

            & \multicolumn{2}{c}{\bf MAP} & \multicolumn{2}{c}{\bf Temp. Scaling} & \multicolumn{2}{c}{\bf Deep Ens.} & \multicolumn{2}{c}{\bf GP-DSCS} & \multicolumn{2}{c||}{\bf LLL} & \multicolumn{2}{c}{\bf LLL-RGPR-LL} & \multicolumn{2}{c}{\bf LLL-RGPR-OOD} \\

            \cmidrule(r){2-3} \cmidrule(r){4-5} \cmidrule(r){6-7} \cmidrule(r){8-9} \cmidrule(l){10-11} \cmidrule(l){12-13} \cmidrule(l){14-15}

            \textbf{Datasets} & {\bf AUROC $\downarrow$} & {\bf AUPRC $\downarrow$} & {\bf AUROC $\downarrow$} & {\bf AUPRC $\downarrow$} & {\bf AUROC $\downarrow$} & {\bf AUPRC $\downarrow$} & {\bf AUROC $\downarrow$} & {\bf AUPRC $\downarrow$} & {\bf AUROC $\downarrow$} & {\bf AUPRC $\downarrow$} & {\bf AUROC $\downarrow$} & {\bf AUPRC $\downarrow$} & {\bf AUROC $\downarrow$} & {\bf AUPRC $\downarrow$} \\

            \midrule

            \textbf{MNIST} & - & - & - & - & - & - & - & - & - & - & - & - & - & - \\
            EMNIST & 95.0 & 89.6 & 94.9$\pm$0.0 & 89.5$\pm$0.0 & \textbf{95.7} & \textbf{91.2} & 94.8$\pm$0.0 & 89.0$\pm$0.0 & 94.2$\pm$0.0 & 86.8$\pm$0.0 & 94.5$\pm$0.0 & 87.6$\pm$0.0 & 94.5$\pm$0.0 & 87.8$\pm$0.0 \\
            KMNIST & 96.0 & 93.0 & 96.1$\pm$0.0 & 93.5$\pm$0.0 & 98.3 & 97.6 & 96.4$\pm$0.0 & 93.7$\pm$0.0 & 98.4$\pm$0.0 & 98.3$\pm$0.0 & \textbf{99.8}$\pm$0.0 & \textbf{99.8}$\pm$0.0 & \textbf{99.8}$\pm$0.0 & \textbf{99.8}$\pm$0.0 \\
            FMNIST & 92.2 & 85.8 & 92.2$\pm$0.0 & 86.2$\pm$0.0 & 96.6 & 94.0 & 92.7$\pm$0.0 & 86.5$\pm$0.0 & 96.8$\pm$0.0 & 96.9$\pm$0.0 & 99.7$\pm$0.0 & 99.7$\pm$0.0 & \textbf{99.8}$\pm$0.0 & \textbf{99.8}$\pm$0.0 \\
            GrayCIFAR10 & 98.0 & 98.5 & 97.8$\pm$0.0 & 98.4$\pm$0.0 & 99.0 & 99.4 & 98.0$\pm$0.0 & 98.6$\pm$0.0 & 98.5$\pm$0.0 & 99.0$\pm$0.0 & \textbf{99.9}$\pm$0.0 & \textbf{100.0}$\pm$0.0 & 99.8$\pm$0.0 & 99.9$\pm$0.0 \\
            UniformNoise & 0.1 & 59.8 & 0.4$\pm$0.0 & 60.1$\pm$0.0 & 42.6 & 76.5 & 0.1$\pm$0.0 & 59.8$\pm$0.0 & 84.6$\pm$0.1 & 96.3$\pm$0.0 & \textbf{99.9}$\pm$0.0 & \textbf{100.0}$\pm$0.0 & 99.8$\pm$0.0 & \textbf{100.0}$\pm$0.0 \\

            \midrule

            \textbf{CIFAR10} & - & - & - & - & - & - & - & - & - & - & - & - & - & - \\
            SVHN & 95.7 & 91.0 & 96.1$\pm$0.0 & 91.2$\pm$0.0 & 95.2 & 92.0 & 93.6$\pm$0.0 & 85.6$\pm$0.0 & \textbf{96.3}$\pm$0.0 & \textbf{92.1}$\pm$0.0 & 96.2$\pm$0.0 & 91.9$\pm$0.0 & 95.8$\pm$0.0 & 90.2$\pm$0.0 \\
            LSUN & 91.8 & 99.6 & 92.2$\pm$0.0 & 99.6$\pm$0.0 & \textbf{92.8} & \textbf{99.7} & 90.7$\pm$0.0 & 99.6$\pm$0.0 & 92.7$\pm$0.0 & \textbf{99.7}$\pm$0.0 & \textbf{92.8}$\pm$0.0 & \textbf{99.7}$\pm$0.0 & 92.6$\pm$0.0 & \textbf{99.7}$\pm$0.0 \\
            CIFAR100 & 87.3 & 83.7 & 87.4$\pm$0.0 & 83.4$\pm$0.0 & \textbf{90.1} & \textbf{89.5} & 86.3$\pm$0.0 & 82.4$\pm$0.0 & 88.0$\pm$0.0 & 84.7$\pm$0.0 & 87.9$\pm$0.0 & 84.5$\pm$0.0 & 87.0$\pm$0.0 & 82.9$\pm$0.0 \\
            FMNIST3D & 92.9 & 92.2 & 93.3$\pm$0.0 & 92.5$\pm$0.0 & 94.0 & 94.5 & 92.3$\pm$0.0 & 91.6$\pm$0.0 & 94.7$\pm$0.0 & 94.5$\pm$0.0 & 94.7$\pm$0.0 & 94.5$\pm$0.0 & \textbf{97.4}$\pm$0.0 & \textbf{97.5}$\pm$0.0 \\
            UniformNoise & 96.7 & 99.2 & 97.1$\pm$0.0 & 99.3$\pm$0.0 & 92.8 & 98.4 & 94.2$\pm$0.0 & 98.7$\pm$0.0 & 98.8$\pm$0.0 & 99.7$\pm$0.0 & \textbf{98.9}$\pm$0.0 & 99.7$\pm$0.0 & \textbf{98.9}$\pm$0.0 & \textbf{99.8}$\pm$0.0 \\

            \midrule

            \textbf{SVHN} & - & - & - & - & - & - & - & - & - & - & - & - & - & - \\
            CIFAR10 & 95.4 & 97.0 & 95.4$\pm$0.0 & 96.9$\pm$0.0 & \textbf{97.5} & 98.9 & 95.0$\pm$0.0 & 96.7$\pm$0.0 & 97.3$\pm$0.0 & 98.9$\pm$0.0 & 97.3$\pm$0.0 & 98.9$\pm$0.0 & 97.4$\pm$0.0 & \textbf{99.0}$\pm$0.0 \\
            LSUN & 95.6 & 99.9 & 95.6$\pm$0.0 & 99.9$\pm$0.0 & \textbf{98.0} & \textbf{100.0} & 95.1$\pm$0.0 & 99.9$\pm$0.0 & 97.4$\pm$0.0 & \textbf{100.0}$\pm$0.0 & 97.4$\pm$0.0 & \textbf{100.0}$\pm$0.0 & \textbf{98.0}$\pm$0.0 & \textbf{100.0}$\pm$0.0 \\
            CIFAR100 & 94.5 & 96.4 & 94.5$\pm$0.0 & 96.4$\pm$0.0 & \textbf{97.3} & 98.7 & 94.1$\pm$0.0 & 96.1$\pm$0.0 & 96.8$\pm$0.0 & 98.7$\pm$0.0 & 96.9$\pm$0.0 & 98.7$\pm$0.0 & 97.1$\pm$0.0 & \textbf{98.8}$\pm$0.0 \\
            FMNIST3D & 94.2 & 96.4 & 94.2$\pm$0.0 & 96.4$\pm$0.0 & 96.5 & 98.5 & 94.1$\pm$0.0 & 96.4$\pm$0.0 & 96.0$\pm$0.0 & 98.2$\pm$0.0 & 97.8$\pm$0.0 & 99.2$\pm$0.0 & \textbf{99.9}$\pm$0.0 & \textbf{100.0}$\pm$0.0 \\
            UniformNoise & 96.8 & 99.7 & 96.9$\pm$0.1 & 99.7$\pm$0.0 & \textbf{98.9} & \textbf{99.9} & 96.7$\pm$0.1 & 99.7$\pm$0.0 & 97.7$\pm$0.0 & 99.8$\pm$0.0 & 97.9$\pm$0.0 & 99.8$\pm$0.0 & 98.8$\pm$0.0 & \textbf{99.9}$\pm$0.0 \\

            \midrule

            \textbf{CIFAR100} & - & - & - & - & - & - & - & - & - & - & - & - & - & - \\
            SVHN & 78.8 & 63.7 & 79.3$\pm$0.0 & 64.2$\pm$0.0 & \textbf{84.6} & 73.2 & 84.4$\pm$0.0 & \textbf{73.3}$\pm$0.0 & 80.3$\pm$0.0 & 66.6$\pm$0.0 & 79.9$\pm$0.0 & 65.7$\pm$0.0 & 78.0$\pm$0.0 & 58.7$\pm$0.0 \\
            LSUN & 81.1 & 99.1 & 81.2$\pm$0.0 & 99.1$\pm$0.0 & \textbf{83.2} & \textbf{99.2} & 80.3$\pm$0.0 & 99.1$\pm$0.0 & 82.5$\pm$0.1 & \textbf{99.2}$\pm$0.0 & 82.9$\pm$0.1 & \textbf{99.2}$\pm$0.0 & 82.3$\pm$0.0 & \textbf{99.2}$\pm$0.0 \\
            CIFAR10 & 78.7 & 77.8 & 78.9$\pm$0.0 & 77.9$\pm$0.0 & \textbf{80.1} & \textbf{79.6} & 78.1$\pm$0.0 & 77.2$\pm$0.0 & 78.9$\pm$0.0 & 77.6$\pm$0.0 & 78.9$\pm$0.0 & 77.7$\pm$0.0 & 77.9$\pm$0.0 & 75.6$\pm$0.0 \\
            FMNIST3D & 87.4 & 86.9 & 87.8$\pm$0.0 & 87.3$\pm$0.0 & 89.0 & 89.5 & 85.7$\pm$0.0 & 85.4$\pm$0.0 & 88.5$\pm$0.0 & 88.1$\pm$0.0 & 88.6$\pm$0.0 & 88.2$\pm$0.0 & \textbf{93.3}$\pm$0.0 & \textbf{93.1}$\pm$0.0 \\
            UniformNoise & 93.4 & 98.5 & 93.5$\pm$0.0 & 98.5$\pm$0.0 & 86.4 & 96.9 & 93.3$\pm$0.0 & 98.5$\pm$0.0 & 94.2$\pm$0.0 & 98.7$\pm$0.0 & \textbf{96.3}$\pm$0.0 & \textbf{99.2}$\pm$0.0 & 95.8$\pm$0.0 & 99.1$\pm$0.0 \\

            \bottomrule
        \end{tabular}
    }
\end{table*}

\begin{table*}[htb]
    \caption{
    Optimal hyperparameter for each layer (or residual block for ResNet ) on LLL.
    }
    \label{tab:opt_hyperparams}

    \centering
    \footnotesize
    \renewcommand{\tabcolsep}{8pt}

    \begin{tabular}{lccccc}
        \toprule

        \textbf{Datasets} & \textbf{Input} & \textbf{Layer 1} & \textbf{Layer 2} & \textbf{Layer 3} & \textbf{Layer 4} \\

        \midrule

        \textbf{$\L_\text{LL}$} \\
        MNIST & 3.3939e-08 & 5.4485e-07 & 1.1377e-07 & 2.3509e-03 & - \\
        SVHN & 9.3995e-04 & 1.3767e-04 & 1.1347e-04 & 2.2835e-04 & 3.9480e-05 \\
        CIFAR10 & 0.0036 & 0.0005 & 0.0008 & 0.0018 & 0.0028 \\
        CIFAR100 & 0.0094 & 0.0093 & 0.0019 & 0.0049 & 0.0144 \\

        \midrule

        \textbf{$\L_\text{OOD}$ (Synthetic)} \\
        MNIST & 1.7384e-05 & 1.6409e-06 & 1.3555e-07 & 2.5206e-03 & - \\
        SVHN & 8.2850e+00 & 6.2021e-03 & 9.1418e-03 & 4.7633e-03 & 1.3424e-02 \\
        CIFAR10 & 4.6957e+01 & 8.4602e-04 & 1.3050e-03 & 5.9322e-03 & 1.9222e-03 \\
        CIFAR100 & 2.6372e+01 & 2.8527e-03 & 8.7588e-04 & 4.5595e-03 & 2.5490e-01 \\

        \midrule

        \textbf{$\L_\text{OOD}$ (32x32 ImageNet)} \\
        MNIST & 3.5457e-08 & 5.9255e-07 & 1.1685e-07 & 2.4544e-03 & - \\
        SVHN & 1.1849e-03 & 1.3038e-01 & 3.5909e-04 & 3.8309e-04 & 8.2367e-05 \\
        CIFAR10 & 0.0236 & 0.9079 & 0.0030 & 0.0049 & 0.0053 \\
        CIFAR100 & 0.0152 & 0.9533 & 0.0051 & 0.0094 & 0.2049 \\

        \bottomrule
    \end{tabular}
\end{table*}

\begin{table*}
    \caption{UQ performance with ImageNet32x32 as $\D_\text{out}$.}
    \label{tab:ood_imagenet}

    \centering
    \small

    \begin{tabular}{lcccc}
        \toprule

        \textbf{Methods} & \textbf{MNIST} & \textbf{CIFAR10} & \textbf{SVHN} & \textbf{CIFAR100} \\

        \midrule

        \textbf{ECE $\downarrow$} \\
        LLL-RGPR-LL & 15.8 & 3.6 & 13.1 & 5.7 \\
        LLL-RGPR-OOD & 19.6 & 12.5 & 15.9 & 15.8 \\
        LLL-RGPR-OOD ImageNet & 15.8 & 20.3 & 18.8 & 19.3 \\

        \midrule
        \midrule

        \textbf{FPR@95 $\downarrow$} \\
        LLL-RGPR-LL & 3.9 & 29.6 & 13.8 & 65.8 \\
        LLL-RGPR-OOD & 3.6 & 24.2 & 9.6 & 63.0 \\
        LLL-RGPR-OOD ImageNet & 3.9 & 39.5 & 7.3 & 61.0 \\

        \bottomrule
    \end{tabular}
\end{table*}

\subsection{Regression}

To empirically validate our method and analysis (esp. \cref{prop:asymp_variance}), we present a toy regression results in \cref{fig:toy_reg}. RGPR improves the BNN further: Far away from the data, the error bar becomes wider. For more challenging problems, we employ a subset of the standard UCI regression datasets. Our goal here, similar to the classification case, is to compare the uncertainty behavior of RGPR-augmented BNN baselines near the training data (inliers) and far away from them (outliers). The outlier dataset is constructed by sampling 1000 points from the standard Gaussian and scale them with $\alpha = 2000$. The metric used is the predictive error bar (standard deviation), i.e. the same metric visually used in \cref{fig:toy_reg}. Following the standard practice (see e.g. \citet{sun2018functional}), we use a two-layer ReLU network with 50 hidden units. The Bayesian methods used are LLL, KFL, SWAG, and stochastic variational GP \citep[SVGP,][]{hensman2015scalable} using 50 inducing points. Finally, we standardize the data and the hyperparameter for RGPR is set to 0.001 so that \Cref{prop:no_training} is satisfied. The results are presented in \cref{tab:ood_reg_faraway}. We can observe that RGPR retain high confidence estimates over inlier data and yield much larger error bars compared to the base methods.

\begin{figure*}[t]
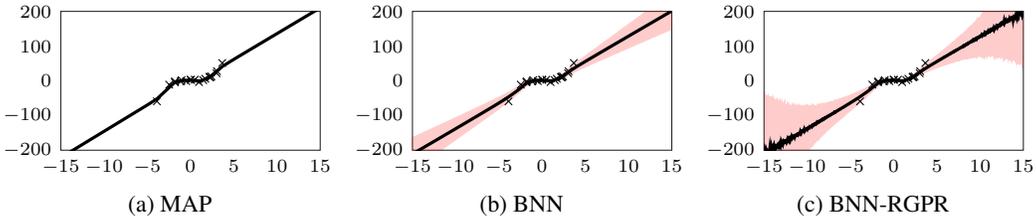

    \centering

    \subfloat[MAP]{\input{figs/toy_reg_map.tex}}
    \subfloat[BNN]{\input{figs/toy_reg_laplace.tex}}
    \subfloat[BNN-RGPR]{\input{figs/toy_reg_laplace_relugp.tex}}

    \caption{Toy regression with a BNN and additionally, our RGPR. Shades represent $\pm1$ std. dev.}

    \label{fig:toy_reg}
\end{figure*}

\begin{table}[t]
    \caption{Regression far-away outlier detection. Values correspond to predictive error bars (averaged over ten prediction trials), similar to what shades represent in \cref{fig:asymp_overconfidence}. ``In'' and ``Out'' correspond to inliers and outliers, respectively.}
    \label{tab:ood_reg_faraway}

    \vspace{0.5em}

    \centering
    \footnotesize
    \setlength\tabcolsep{6pt}

    \begin{tabular}{lrrrrrrrr}
        \toprule

        & \multicolumn{2}{c}{\textbf{housing}} & \multicolumn{2}{c}{\textbf{concrete}} & \multicolumn{2}{c}{\textbf{energy}} & \multicolumn{2}{c}{\textbf{wine}} \\
        \cmidrule(r){2-3} \cmidrule(r){4-5} \cmidrule(r){6-7} \cmidrule(r){8-9}

        \textbf{Methods}  & \textbf{In} $\downarrow$ & \textbf{Out} $\uparrow$ & \textbf{In} $\downarrow$ & \textbf{Out} $\uparrow$ & \textbf{In} $\downarrow$ & \textbf{Out} $\uparrow$ & \textbf{In} $\downarrow$ & \textbf{Out} $\uparrow$ \\

        \midrule

        LLL & 0.405 & 823.215 & 0.324 & 580.616 & 0.252 & 319.890 & 0.126 & 24.176 \\
        LLL-RGPR & 0.407 & \textbf{2504.325} & 0.329 & \textbf{3394.466} & 0.253 & \textbf{2138.909} & 0.129 & \textbf{1948.813} \\

        \midrule

        KFL & 1.171 & 2996.606 & 1.281 & 2518.338 & 0.651 & 1486.748 & 0.291 & 475.141 \\
        KFL-RGPR & 1.165 & \textbf{3909.140} & 1.264 & \textbf{4258.177} & 0.656 & \textbf{2681.780} & 0.292 & \textbf{2031.481} \\

        \midrule

        SWAG & 0.181 & 440.085 & 1.192 & 2770.455 & 0.418 & 1066.044 & 0.181 & 77.357 \\
        SWAG-RGPR & 0.186 & \textbf{2403.366} & 1.146 & \textbf{4693.273} & 0.428 & \textbf{2647.922} & 0.187 & \textbf{1947.677} \\

        \midrule

        SVGP & 0.641 & 2.547 & 0.845 & 3.100 & 0.367 & 2.237 & 0.092 & 0.983 \\
        SVGP-RGPR & 0.641 & \textbf{1973.506} & 0.845 & \textbf{1932.061} & 0.367 & \textbf{1931.299} & 0.095 & \textbf{1956.027} \\

        \bottomrule
    \end{tabular}
\end{table}

\end{appendices}

\end{document}